\newcommand{\R}{\mathbb{R}}
\newcommand{\PP}{\mathbb{P}}
\newcommand{\1}{\mathbbm{1}}
\newcommand{\N}{\mathbb{N}}
\newcommand{\mS}{\mathbb{S}}
\newcommand{\mR}{\mathcal R}
\newcommand{\mX}{\mathcal X}
\newcommand{\beq}{\begin{equation}}
	\newcommand{\eeq}{\end{equation}}
\def\a{\alpha}
\def\b{\beta}
\def\d{\delta}
\def\g{\gamma}
\def\l{\lambda}
\def\m{\mu}
\def\th{\theta}
\def\e{\varepsilon}
\newcommand{\cD}{{\cal D}}
\newcommand{\cN}{{\cal N}}
\newcommand{\cL}{{\cal L}}
\newcommand{\cP}{{\cal P}}
\newcommand{\cS}{{\cal S}}
\newcommand{\maS}{\mathbb{S}^S}
\newtheorem{theorem}{Theorem}[section]
\newtheorem{lemma}[theorem]{Lemma}
\newtheorem{definition}[theorem]{Definition}
\newtheorem{proposition}[theorem]{Proposition}
\newtheorem{remark}[theorem]{Remark}
\numberwithin{equation}{section}
\title{A Mean Field Games model for finite mixtures of Bernoulli and Categorical distributions}
\date{} 
\author{Laura Aquilanti, Simone Cacace, Fabio Camilli\\ and Raul De Maio}
\begin{document}
	\maketitle
	\begin{abstract}
		Finite mixture models are an important tool in the statistical analysis of data, for example in data clustering. 
		The optimal parameters of a mixture model are usually computed by maximizing the log-likelihood functional via the Expectation-Maximization algorithm. We propose an alternative approach based on the theory of Mean Field Games, a class of differential games with an infinite number of agents. We show that the solution of a finite state space multi-population Mean Field Games system characterizes the critical points of the   log-likelihood functional for a Bernoulli mixture. The approach is then  generalized to mixture models of categorical distributions.  Hence, the Mean Field Games approach  provides a  method to compute the parameters of the mixture model, and we show its application to some standard examples in cluster analysis.
	\end{abstract}
	
	\noindent
	{\footnotesize \textbf{AMS-Subject Classification:} 62H30, 60J10, 49N70, 91C20}.\\
	{\footnotesize \textbf{Keywords:}  Mixture Models; Bernoulli Distribution; Categorical Distribution;  Cluster Analysis; Expectation-Maximization algorithm; Mean Field Games}.

	
	\section{Introduction}
	Finite mixture models, given by convex combinations of probability density functions  (PDFs in short) 
	\begin{equation}\label{eq:intro_mix}
		\pi(x)=\sum_{k=1}^K \a_k \pi_k(x),   \quad \text{with $\a_k\in [0,1]$, \ $\sum_{k=1}^K \a_k=1$},
	\end{equation}
	are an important mathematical tool in  statistical analysis of data. Introduced   by the biometrician K. Pearson  \cite{Pearson}, due to their flexibility,  they are employed in a large variety of fields as  astronomy, biology, genetic, medicine, marketing and engineering (see \cite[Chapter 6-7]{Wedel}, \cite{Titterington}).  In cluster analysis,  a classical problem in unsupervised Machine Learning  consisting in the repartition  of   a data set into subgroups with similar characteristics, finite mixture models  can be used in order to determine the intrinsic structure of clustered data when no information, except for the observed values, are available. For a detailed description of the theory of mixture models and applications, see \cite{Everitt,McLachlan,Titterington}.\par
	Given a data set $\mX$  representing the measurements of a phenomena, whose different values are related to the membership to  unknown categories, a corresponding  finite mixture model  is built by assuming that the data have been generated    by a random variable  $X$, whose unknown probability distribution $\pi$ can be described as in \eqref{eq:intro_mix}.    \par
	The   parameters of the mixture  \eqref{eq:intro_mix} are in general unknown and the aim is to determine them in such a way that they optimally fit the given data set $\mX$. To this end, different methods can be employed such as  the graphical method (\cite{tarter}), the  Bayesian method (\cite{Titterington}) and the  likelihood estimator (\cite{bishop}). The latter one, which is the starting point of our analysis, generates a tough quantity to be maximized,  which is usually computed by means of the  Expectation Maximization (EM in short) algorithm (see  \cite{bilmes}). In our approach, we characterize the optimal parameters of a mixture model through   a multi-population  Mean Field Games (MFG in short) system, a   coupled system  of differential or difference equations  which characterizes the Nash equilibria in the framework of  stochastic   games with a very large number of agents.
	The MFG theory has been introduced   simultaneously by Lasry-Lions \cite{Lions} and Huang-Caines-Malhamé \cite{HMC}  and it has been   successfully applied to different fields, such as economics, biology, environmental policy, etc. (for a plain introduction, see \cite{gomes_saude}). Recently, its scope has been broadened to Machine Learning  applications. In \cite{carmona_lauriere}, it is proposed a numerical scheme for  MFG problems based on tools from neural networks, whereas in \cite{E_et_altri}  the authors recast some deep learning techniques as   mean-field optimal control problems. Concerning unsupervised Machine Learning,  we refer to \cite[Chapter 2]{coron} and \cite{pequito} where multi-population MFG systems  are studied in connection with  cluster problems.
	\par
	In  \cite{aquilanti_et_altri}, we developed a MFG approach to finite mixture models  defined over  a {\it continuous} random variable $X$. Data points are interpreted as agents  and  the aim  is to subdivide  the whole population,  described by a PDF $\pi(x)$, in    $K$   sub-populations, described by   PDFs $\pi_k$ and   mixing coefficients $\a_k$,  on the basis of some similar characteristics  appropriately encoded in the cost functional of the control problems of the different populations. As a result, we end up with a multi-population ergodic Mean Field Game in $\mathbb{R}^d$ which, in the particular case of  a quadratic cost,   characterizes the critical points of   the log-likelihood functional for a mixture of Gaussian distributions.\par
	In this paper, we focus  on  a mixture model  for a {\it discrete} random variable $X$   described by a Bernoulli distribution or, more generally, by a categorical one. We assume   that the sub-populations
	can be discriminated on the basis of a single specific characteristic taking only one of $S$ different values (the case of several different characteristics will be discussed further on  in the paper). 
	\par
	Given a data set $\mX=\{x_1,\dots,x_N\}$, in order to characterize the components of the mixture model, i.e. the PDFs $\pi_k$ and the weights $\a_k$,
	we introduce the   $K$-populations finite state MFG system 
	\begin{equation}\label{eq:intro_MFG}
		\left\{
		\begin{array}{ll}
			V_k(i)=\displaystyle{\min_{P_i:\,P_{ij}\ge 0, \sum_j P_{ij}=1}}\left\{\sum_{j=1}^S P_{ij} \left(c ( P_{ij} )+\e\log(P_{ij})+F(i, \theta_k)+V_k(j)\right)\right\}-\l_k,\\ [8pt]
			\pi_k(i)= \sum_{ j=1}^S   P^{k}_{ji}\pi_k(j),\\[8pt]
			\pi_k(i)\ge 0,\, \sum_{i=1}^S\pi_k(i)=1, \,\sum_{i=1}^S V_k(i)=0,\\[8pt]
			\alpha_k=\frac{1}{N}\sum_{n=1}^N\g_k(x_n),
		\end{array}
		\right.
	\end{equation}
	for $i\in\{1,\dots,S\}$. 
	The term $c(P_{ij})$  is a transition cost between the  states $i$ and $j$ and $\e\log(P_{ij})$ is an entropy penalization term    which forces  the agents to diversify their transition choices. 
	Each of the $K$ sub-populations is characterized by a quadruple $(V_k,\l_k,\pi_k,\a_k)$, with the couple $(V_k,\l_k)$  solving an ergodic Hamilton-Jacobi-Bellman equation and with the probability distribution $\pi_k$  solving a Fokker-Planck equation where the transition matrix $P^{k}$,  composed of the rows $ P^{k}_i=\{P^{k}_{ij}\}_{j=1}^S$, is such that
	\begin{equation*}
	P^k_i=\displaystyle{{\arg\min}_{P_i:\,P_{ij}\ge 0, \sum_j P_{ij}=1}}\left\{\sum_{j=1}^S P_{ij} \left(c ( P_{ij} )+\e\log(P_{ij})+F(i, \theta_k)+V_k(j)\right)\right\}.
	\end{equation*}
	The vector $\theta_k\in\R^S$    represents the average value of the data set  with respect to the distribution $\pi_k$ and therefore depends on the solution of \eqref{eq:intro_MFG}.
	Interaction among the sub-populations is   encoded  in the weights $\a_k$ and in the coupling cost $F(i, \theta_k)$, which depend  on the responsibilities
	\begin{equation*}
		\g_k(x_n)=\frac{\a_k\pi_k(x_n)}{\pi(x_n)}\qquad k=1,\dots,K,\,x_n\in\mX.
	\end{equation*}
	These quantities play  a crucial role and, in  cluster analysis,  can be  used to assign a point to the class with the highest $\g_k$ (see Section \ref{sec:mixture_models} for  some  more details on the game theoretic interpretation of \eqref{eq:intro_MFG}).\\
	Relying on the theory for finite states MFG system developed in  \cite{gms}, we prove that the system \eqref{eq:intro_MFG} admits a solution $(V_k,\lambda_k,\pi_k,\a_k)$, $k=1,\dots,K$. Moreover we show that, for $S=2$,  \eqref{eq:intro_MFG} characterizes the critical points of the maximum log-likelihood functional for a mixture of Bernoulli distributions and therefore gives an alternative way to compute the critical points of the log-likelihood functional. We show an  application  of this method to the computation of the optimal parameters for  mixture models related to    some standard  examples in cluster analysis, such as  digit classification.\par
	We remark that the goal of the paper is to provide a new  perspective to a class of problems which, up to now,  have been considered in the classical framework of finite dimensional optimization. For a    linear cost $c$ and $\e=0$, the  algorithm described in Section \ref{sec:numerics} is equivalent to the classical EM algorithm and, in general, it is not computationally competitive with other algorithms for cluster analysis. But our model is feasible to be generalized in several
	directions and, in a  forthcoming paper, we plan to   study  more general costs, possibly depending on the state variable, in order to exploit in a deeper way the structure of the data set.\par
	The paper is organized as follows. In Section \ref{sec:mixture_models}, we briefly review the finite mixture model  theory, with the corresponding EM algorithm, and the theory of finite state discrete time Mean Field Games problems. In Section \ref{sec:bernoulli}, we introduce the MFG model for a Bernoulli random variable and we show the connection with the maximization of the log-likelihood functional.
	In Section \ref{sec:categorical}, we generalize the model of the previous  section to a mixture of categorical distributions and we prove existence of a solution to the MFG system. In Section \ref{sec:numerics}, we apply   the MFG model to  some standard problems in cluster analysis.
	In Appendix \ref{sec:appendix},  we prove some results for a finite-state stationary MFG system we use in the previous sections.

	\section{A short introduction to mixture models and finite state discrete time Mean Field Games  problems}\label{sec:mixture_models}
	In this section, we   review  the parametric mixture model and the corresponding EM algorithm for the optimization of the parameters. We also give a short  description of the game theoretic interpretation of the MFG system \eqref{eq:intro_MFG}.
	\subsection*{Mixture models and EM algorithm.} Let $X$ be a random variable, univariate or multivariate and consider a sample $\mX=\{x_1, \dots, x_N \}$ of size $N$ of $X$,  where the sample space $\cS$ can be discrete or continuous. Let  $p(x)$ be the unknown distribution   of $X$, defined with respect to an appropriate reference  measure on $\mathcal{S}$.
	We assume that $X$ comes from a finite mixture model, i.e. $p(x)$ can be written as a convex combination of PDFs $p_k$ as
	\begin{equation}\label{eq:mix_mixture}
		p(x)= \sum_{k=1}^{K}\alpha_kp_k(x),\quad x\in\cS,
	\end{equation}
	where $K$, the number of  the  components of $p$, is supposed to be known a priori and $\alpha_k$,  the weights or mixing coefficients,   satisfy $\sum_{k=1}^{K}\alpha_k=1$ and $\alpha_k \geq 0$. How to determine the number $K$ of PDFs in the mixture model is an  unresolved issue and, in general,   a combination of criteria and experimental analysis is used to guide the decision (see \cite[Chapter 6]{Everitt}).\par
	Usually, it is assumed that the components of the mixture \eqref{eq:mix_mixture} belong to the same parametric family of density distributions, i.e. they can be written as $p_k(x) = p(x; \Theta_k)$, where $\Theta_k$ is the parameter which defines the $k$-th PDF. For example, in the Gaussian mixture model, $p (x;\Theta_k)=\cN(x;\m_k,\Sigma_k)$ are Gaussian distributions of parameters $\Theta_k=(\m_k,\Sigma_k)$ where $\mu_k$, $\Sigma_k$ are the mean and  covariance matrix; in the Bernoulli mixture model, $p(x; \Theta_k)= \mathcal{B}(x; \mu_k)$ are Bernoulli distributions of parameter $\Theta_k = \mu_k$. 
	The  aim is to find the parameters $\alpha=(\a_1,\dots,\a_K)$ and $\Theta=(\Theta_1,\dots,\Theta_K)$ which give  the best representation of the sample $\mX$. This can be achieved 
	through the maximization of the log-likelihood functional
	\begin{equation}\label{eq:mix_log}
		\cL(\alpha,\Theta; \mX)=\sum_{n=1}^{N}\log  \bigg[\sum_{k=1}^{K}\alpha_kp(x_n ; \Theta_k)\bigg] 
	\end{equation}
	By writing the necessary condition for the extrema of \eqref{eq:mix_log}, we have
	\begin{align}
		&\dfrac{\partial \cL}{\partial \alpha_k}= \sum_{n=1}^{N}\dfrac{p(x_n; \Theta_k)}{\sum_{j=1}^{K}\alpha_jp(x_n;\Theta_j)}-\lambda=0, \label{alpha}\\
		&\dfrac{\partial \cL}{\partial \Theta_{k}}=\sum_{n=1}^{N} \dfrac{\alpha_k}{\sum_{j=1}^{K}\alpha_jp(x_n;\Theta_j)}\cdot\frac{\partial p(x_n;\Theta_k)}{\partial \Theta_{k}}=0.\label{theta}
	\end{align} 
	where $\l$ is a Lagrange multiplier which takes into account  the constraint for the mixing coefficients.
	The Expectation-Maximization algorithm is an iterative procedure  for the computation of a solution of the previous system.
	We shortly describe its derivation. Firstly, it is introduced a latent, or hidden, $K$-dimensional random variable $Y=(Y_1,\dots,Y_K)$, with $Y_k\in\{0,1\}$ and $\sum_{k=1}^KY_k=1$, saying which component of the mixture \eqref{eq:mix_mixture} has generated a given  sample point $x_n$.
	More specifically,  we assume that, for each observed point $x_n\in\R^D$, there exists a corresponding unobserved one $y_n\in\R^K$  such that, if the point $x_n$ has been generated by the $k$-component  of the mixture, then  $y_{n,k}=1$ and $y_{n,j}=0$ for $j\neq k$. Hence $Y$ is a multinomial random variable and we assume that $\PP(Y_k=1)=\a_k$.
	A simple application of Bayes' Theorem allows to compute   the responsibility, i.e. the probability of $Y$ given $X$,  
	\begin{equation}\label{eq:mix_resp}
		\gamma_k(x_n):=\mathbb{P}(Y_k=1|X=x_n)=\dfrac{\alpha_k p(x_n;\Theta_k)}{\sum_{j=1}^{K}\alpha_jp(x_n;\Theta_j)}.
	\end{equation}
	Multiplying \eqref{alpha} by $\alpha_k$ and summing over $k$ we get that $\l=N$, hence replacing \eqref{eq:mix_resp} in \eqref{alpha} we get a first condition
	for an extremum  of \eqref{eq:mix_log}
	\begin{equation}\label{eq:mix_critical1}
		\alpha_k= \dfrac{1}{N}\sum_{n=1}^{N} \gamma_k(x_n).
	\end{equation}
	Using \eqref{eq:mix_resp} in \eqref{theta}, we have
	\begin{equation}\label{eq:mix_critical2}
		\sum_{n=1}^{N} \g_k(x_n)\dfrac{\partial \log (p(x_n;\Theta_k))}{\partial \Theta_{k}}=0.
	\end{equation}
	Since the   coefficients $\alpha_k$ in \eqref{eq:mix_critical1} depend  on $\Theta_k$ via the responsibilities $\gamma_k$, the equations \eqref{eq:mix_critical1}-\eqref{eq:mix_critical2}  do not provide a closed form solution for the parameters.
	
	 However, they  suggest the following iterative scheme which alternates two steps. Starting from some arbitrary initialization $(\a^{(0)},\Theta^{(0)})$, at $h$-iteration we perform the following steps: in the Expectation step (E-step in short), using the current values $(\alpha^{(h)}, \Theta^{(h)})$ of the parameters, we compute the responsibilities  by means of formula \eqref{eq:mix_resp} with $\Theta_k^{(h)}$ and $\alpha_k^{(h)}$ in place of $\Theta_k$ and $\alpha_k$. In the Maximization step (M-step in short), given the responsibilities as in the E-step, we compute the new parameters $(\alpha^{(h+1)}, \Theta^{(h+1)})$ by means of \eqref{eq:mix_critical1}-\eqref{eq:mix_critical2}. It can be proved that at each iteration (E-step, M-step), the log-likelihood function increases its value. For more details see \cite{bilmes}, \cite[Chapter 9]{bishop}.\\
	For specific parametrized families of distributions, such  as  Gaussian and Bernoulli distributions, it is possible to get explicit formulas for the parameters $\Theta_k$.
	Computed  the responsibility $\gamma^{(h+1)}_k(x)$ by means of \eqref{eq:mix_resp} in the E-step, 
	in the M-step we have the following explicit formulas 
	\[
	\alpha^{(h+1)}_k= \dfrac{1}{N}\sum_{n=1}^{N}\gamma^{(h+1)}_k(x_n)
	\]
	and, for a Gaussian mixture,
	\begin{align*}
		&\mu_k^{(h+1)}=\dfrac{\sum_{n=1}^{N}\gamma^{(h+1)}_k(x_n)x_n}{\sum_{n=1}^{N}\gamma^{(h+1)}_k(x_n)}, \\
		& \Sigma^{(h+1)}_k= \dfrac{\sum_{n=1}^{N}\gamma^{(h+1)}_k(x_n)(x_n-\mu^{(h+1)}_k)(x_n-\mu_k^{(h+1)})^t}{\sum_{n=1}^{N}\gamma^{(h+1)}_k(x_n)},
	\end{align*}
	for a Bernoulli mixture,
	\begin{equation*}
		\mu_k^{(h+1)}=\dfrac{\sum_{n=1}^{N}\gamma^{(h+1)}_k(x_n)x_n}{\sum_{n=1}^{N}\gamma^{(h+1)}_k(x_n)}.
	\end{equation*}
		\begin{remark}
		 Equations \eqref{eq:mix_critical1}- \eqref{eq:mix_critical2} also characterize the necessary conditions  for the extrema of the functional 
		 \begin{equation}\label{eq: expected_likelihood}
		  \mathcal{\tilde{L}}=\sum_{n=1}^{N} \sum_{k=1}^{K} \gamma_k(x_n)(\log(\alpha_k)+ \log p(x_n; \Theta_k)),
		\end{equation}
		 where $\gamma_k$ is as in \eqref{eq:mix_resp}, which  provides a different, but equivalent approach to the optimization of the mixture model for a given data set (see \cite[Chapter 9.3]{bishop}).  
	
		\end{remark}
	
\subsection*{Mean Field Games} 
 Following   \cite{biswas,gms}, we  give a short interpretation of   system \eqref{eq:intro_MFG} in terms of a control problem for a distribution of agents. For simplicity of notations, we will consider the case of a single population, i.e. $K=1$.\\
We first describe the finite horizon problem. Given two natural numbers $S,T\ge 1$,  representing respectively the number of possible states for the agents and the total duration of the process, a solution of the MFG problem is a sequence of pairs
$\{(\pi^t,V^t)\}_{t=0}^T$ where $\pi^t\in\R^S$, $\pi^t(i)\in[0,1]$ and $\sum_{i=1}^S \pi^t(i)=1$, is a  probability vector describing the distribution of the agents among the $S$ states at time $t$; the components   of the value function $V^t\in\R^S$ represent the expected cost for an agent in the corresponding state at time $t$. The family of  vectors $\{(\pi^t,V^t)\}_{t=0}^T$ must satisfy certain optimality conditions we are going to explain. Given an initial distribution of the agents $\bar\pi^0$, the evolution of the population follows the law 
\begin{equation*}
		\pi^{t+1}(i)=\sum_{j=1}^{S}P^{t}_{ji}\pi^{t}(j),\qquad t=0,\dots,T-1,
\end{equation*}
where  the  transition matrix $P^t$  is a $S\times S$ stochastic matrix with $P^t_{ij}$ representing the probability that an agent  at the state $i$ moves in the state $j$. Given the terminal cost $\bar V^T\in\R^S$, the matrix $P^t$, $t=0,\dots,T-1$, is obtained by minimizing the   cost functional
 \[
	J^t(i,P)=		 \sum_{j=1}^{S} P_{i,j} \left( c (P_{ij})+ \e\log(P_{ij})+ F(i,\theta)+ V^{t+1}(j)\right)
\]
with   the value function defined by $V^t(i)=\inf_{P}[ J^t(i,P)]$, $i,\dots,S$. The pairs $\{(\pi^t,V^t)\}_{t=0}^T$ satisfy 
\[\left\{
		\begin{array}{ll}
			V^t(i)=\displaystyle{\min_{P_i:\,P_{ij}\ge 0, \sum_j P_{ij}=1}}\left\{\sum_{j=1}^S P_{ij} \left(c ( P_{ij} )+\e\log(P_{ij})+F(i,\theta)+V^{t+1}(j)\right)\right\},\\ [8pt]
			\pi^{t+1}(i)= \sum_{ j=1}^S   P^t_{ji}\pi^t(j),\\ [8pt]
			V^T=\bar V^T,\,\pi^0=\bar\pi^0,\, 
		\end{array}
		\right.
\]
for $t=0,\dots, T-1$, where  the transition matrix $P^t$ in the Fokker-Planck equation is composed of rows $P^t_i=\{P^t_{ij}\}_{j=1}^S$ which realize  the minimum in the Hamilton-Jacobi-Bellman equation. The previous system is given by a backward Hamilton-Jacobi-Bellman equation for the value function and  a forward Fokker-Planck equation for the distribution of agents with the corresponding final and initial condition.
\\
The    stationary MFG system, see \eqref{eq:intro_MFG} for $K=1$, is obtained by considering  the long-run average cost
\[
J(i,\{P^t\}_{t\in\N})={\lim\sup}_{T\to\infty}\frac{1}{T}\sum_{t=0}^{T}\sum_{j=1}^{S} P^t_{ij} \left( c (P^t_{ij})+ \e\log(P^t_{ij})+ F(i,\theta)\right).
\]
In this case, the ergodic cost $\l$ is  given by the infimum of   $J(i,\{P^t\}_{t\in\N})$ and it is independent of the state $i$, while $\pi$ represents the invariant distribution of the process. Note that the solution of the stationary Hamilton-Jacobi-Bellman equation  is defined up to an additive constant and therefore the normalization condition $\sum_{j=1}^S V(j)=0$  is added.\\
In the multi-population system \eqref{eq:intro_MFG}, each population satisfies a control problem as before, and the interaction among the different populations is given by the value $\th_k$ and the mixing coefficients $\a_k$.

	\section{A Mean Field Games approach to    Bernoulli mixture models}\label{sec:bernoulli}
	In this section, we describe a Mean Field Games approach for a mixture of multivariate  Bernoulli distributions. This method is used to cluster high-dimensional binary data. 
	In order to explain the technique, we start  describing     an application of the previous model   to the classification of   handwritten digits  (see \cite[Cap. 9]{bishop}). 
	Consider a database  $\mX= \{x_1, \dots, x_N\}$ of   images representing    hand-written digits of $K$ numbers  between 0 and 9. Each image, which is originally given by a square of $\sqrt{D}\times \sqrt{D}$  pixels  in grey scale, is turned in a binary   vector $x=(x^{1}, \dots, x^{D})$,  of size $D$, by setting the elements whose value exceeds  $1/2$ to state $1$ and the remaining to state $0$, with $1$ corresponding to a white pixel, $0$ to a black one. 
	Assuming to know $K$, the aim   is to subdivide the data set $\mX$ in $K$ clusters, where each cluster  is represented by the component $\pi_k$ of a mixture of Bernoulli distributions and an image is attributed to the cluster which maximizes the responsibility. In a second phase, these clusters can be used to identify the digit corresponding to a new image, but we will not consider this problem here. In Figure \ref{fig:bern_example_digits}, we see some samples of digits taken from database MNIST \cite{url_MINST}.
	\begin{figure}[!h]
		\centering
		\includegraphics[width =0.9\textwidth]{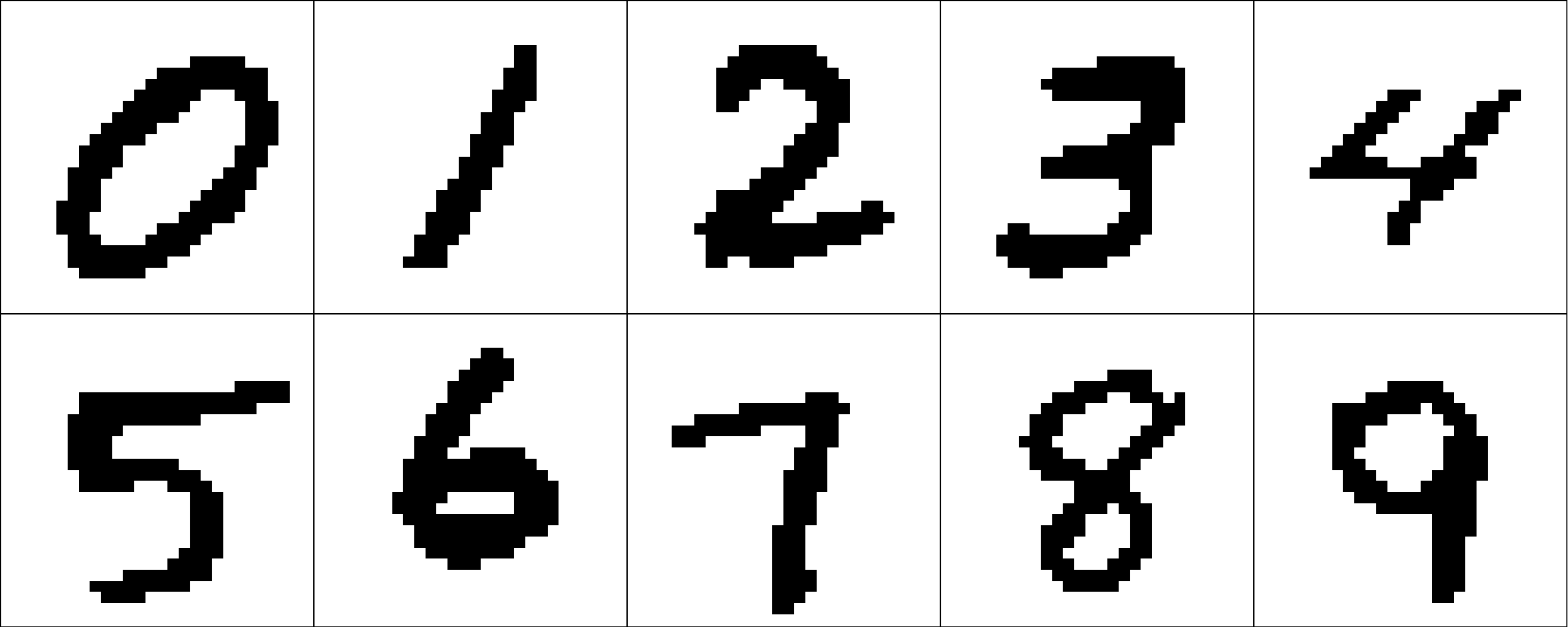}
		\caption{Samples of hand-written digits from the MNIST database}
		\label{fig:bern_example_digits}
	\end{figure}
	We suppose that the digits are  the i.i.d. observations  of a random variable $X$ in $\R^D$. Assuming that the colour of each  pixel is independent of all the other  ones, this implies that the $D$ components of the random vector $X=(X^1,\dots,X^D)$, with $X^d\in\{0,1\}$, are mutually independent. Hence we can describe the distribution of $X$ as a mixture of $K$  multivariate Bernoulli distributions
	\begin{equation}\label{eq:pi}
		\pi(x)= \sum_{k=1}^{K} \alpha_k \pi_k(x ),
	\end{equation}
	where each measure $\pi_k$, defined by
	\begin{equation}\label{eq:pi_k}
		\begin{split}
			&\pi_k(x)=\prod_{d=1}^{D}\pi_k^d(x^d),\quad x=(x^1,\dots,x^D)\in\{0,1\}^D\\
			&\pi_k^d(x^d)=(\mu_k^d)^{x^{d}}(1- \mu_k^d)^{1-x^{d}},\quad x^d\in\{0,1\}
		\end{split}
	\end{equation} 
	represents  the distribution of a specific digit in the $k^{th}$ cluster.
	The aim is to design a $K$-populations MFG system to find the unknown parameters $\alpha= (\alpha_1, \dots, \alpha_K)\in\R^K$ and $\mu=(\mu_1,\dots,\mu_K)\in\R^{K\times D}$ with $\mu_k=(\mu_k^1,\dots,\mu_k^D)$, in such a way that the measure $\pi$ in \eqref{eq:pi}  optimally fits   the data set $\mX$. Following the analysis  described in Section \ref{sec:mixture_models}, we introduce a $K$-dimensional latent random variable $Y$ that specifies which element of the mixture \eqref{eq:pi} has generated a given data point $x_n$ and,
	for $x_n\in\mX$, we define the responsibilities 
	\begin{equation*}
		\g_k(x_n)=
		\frac{\a_k \pi_k(x_n)}{\pi(x_n)}\qquad k=1,\dots,K.
	\end{equation*}  
	Note that $\g_k(x_n)$ are defined in terms of the unknown  measure $\pi$. 
	Given the responsibilities, we define 
	the average value of the data set  $\theta_k=(\theta_k^1,\dots,\theta_k^D)\in\R^D$ with respect to  the $k^{th}$ component of the mixture by
	\begin{align}
		&\theta_k^d=
		\dfrac{\sum_{n=1}^{N}\g_k(x_n)x_{n}^d}{\sum_{n=1}^{N}\g_k(x_n)}, \quad d=1,\dots,D.\label{eq:bern_average}
	\end{align}
	By definition, $\theta_k^d\in [0,1]$.
	For each $d=1,\dots, D$, we consider the following  2-states $K$-populations MFG system.
	\begin{equation}\label{eq:bern_MFG}
		\left\{
		\begin{array}{ll}
			V_k^d(0)= \min_{p \in [0,1]}\{p (-\frac{1-p}{2}+\e \log (p) + V_k^d(0)) \\[4pt]
			\qquad+(1-p )(-\frac{p}{2}+\e \log(1-p )+ V_k^d(1))\}-\lambda_k^d +  (\theta_k^d)^2 \\[8pt]
			V_k^d(1)= \min_{q \in [0,1]} \{(1-q )(-\frac{q}{2}+\e \log (1-q ) +V_k^d(0))\\[4pt]
			\qquad + q (-\frac{1-q}{2}+\e \log(q ) +V_k^d(1))\}- \lambda_k^d +  (1- \theta_k^d)^2 \\ [8pt]
			\pi_k^d(0)= p \pi_k^d(0)+(1-q )\pi_k^d(1)\\[4pt]
			\pi_k^d(1)= (1-p )\pi_k^d(0)+q \pi_k^d(1)\\[8pt]
			\pi_k^d\ge 0,\,\sum_{x\in \{0,1\}}\pi_k^d(x)=1, \,\sum_{x\in \{0,1\}}V_k^d(x)=0.
		\end{array}
		\right.
	\end{equation}
	where the values $(p,q)$ in the second couple of equations are given by controls realizing the minimum in the first couple of equations.
	The family of  MFG systems \eqref{eq:bern_MFG} is completed with the global  coupling condition
	\begin{equation}\label{eq:bern_weights}
		\alpha_k=\frac{1}{N}\sum_{n=1}^N\gamma_k(x_n).
	\end{equation}
	For each $d=1,\dots,D$ fixed, the unknowns in the  $K$-populations MFG system \eqref{eq:bern_MFG}   are
	the couple $(V_k^d,\l_k^d)$, with $V_k^d\in\R^2$ and $\l_k^d\in\R$, solving the Hamilton-Jacobi-Bellman equation, and the binomial distributions  $\pi_k^d$,    solving the Fokker-Planck equation, while the  weights  $\a_k\in [0,1]$ are global unknowns, independent of the index $d$. 
	It is important to observe that also  the transition matrix
	\begin{equation}\label{eq:bern_trans_matrix}
		P=\begin{bmatrix}
			p&1-p\\1-q& q
		\end{bmatrix},
	\end{equation}
	in the Fokker-Planck equation depends on $k$ and $d$ and, under appropriate assumptions (see Section \ref{sec:categorical}), is  univocally determined by the couple of controls $p,q$ which minimize  the first two Hamilton-Jacobi equations.\\
	For $d$ fixed, the coupling cost $((\theta_k^d)^2, (1-\theta_k^d)^2)$  is defined in such a way that  a population  described by the  density function $\pi_k^d$    distributes  in the two states $\{0,1\}$   so that   its expected value $\m_k^d$ is as close as possible to the average  value $\theta_k^d$ of the data set. Hence this forces the expected value of the $k$-th component $\pi_k$ of the mixture to be close to the average of  $X$,  given the occurrence of the event $Y_k=1$. \\
	Note  that the value $\theta_k^d$ depends on the responsibilities $\g_k$, see \eqref{eq:bern_average}, and therefore, in turn, on the complete distribution $\pi$. Hence, the systems \eqref{eq:bern_MFG} are coupled also with respect to the index $d$   by means of the quantities $\theta_k^d$. Indeed, the covariance matrix of   \eqref{eq:pi}  
	is not diagonal (see \cite[Equation (9.50)]{bishop}) and the model can capture correlation among  the components of the mixture. The well-posedness of problem \eqref{eq:bern_MFG}-\eqref{eq:bern_weights} will be discussed in Section \ref{sec:categorical} in the more general setting of the categorical distributions.\\
	We now show the connection between the multi-population MFG system \eqref{eq:bern_MFG}  and the  functional \eqref{eq: expected_likelihood}, which, in the case of a multivariate Bernoulli 
	distribution   \eqref{eq:pi_k}, is plainly given by  
	\begin{equation}\label{eq:bern_log_lik}
		\begin{split}
			\mathcal{\tilde{L}}(\alpha,\mu; \mX)= \sum_{n=1}^{N}\sum_{k=1}^{K} & \gamma_k(x_n)\bigg(\log(\alpha_k)+ \sum_{d=1}^{D}(x_n^d \log (\mu_k^d)\\
			&+ (1- x_n^d)\log (1- \mu_k^d))\bigg)
		\end{split}
	\end{equation}
	for $\a=(\a_1,\dots,\a_k)$, $\mu=(\mu_1,\dots,\mu_K)\in\R^{K\times D}$, where $\mu_k=(\mu_k^1,\dots,\mu_k^D)$, $k=1,\dots,K$.
	Note that, since the log-likelihood has to be maximized, we are interested in the critical values of $\mathcal{\tilde{L}}(\alpha,\mu; \mX)$  for the values of the parameters $\a_k$, $\mu_k^d$ inside the interval $(0,1)$. By \eqref{eq:bern_log_lik}, the necessary conditions \eqref{eq:mix_critical1}-\eqref{eq:mix_critical2}
	can be computed explicitly in terms of responsibilities   and they are given by
	\begin{equation}\label{eq:bern_consistency}
		\begin{cases}
			\displaystyle	 \mu_k^d= \dfrac{ \sum_{n=1}^{N} \gamma_k(x_n)x_{n}^d}{ \sum_{n=1}^{N}\gamma_k(x_n)} \\[12 pt] 
			\alpha_k=\dfrac{1}{N}\sum_{n=1}^{N}\gamma_k(x_n).
		\end{cases}
	\end{equation}
	In the next proposition, we show that  a Bernoulli mixture satisfying \eqref{eq:bern_consistency} can be always   obtained as a solution of the MFG system \eqref{eq:bern_MFG} with $\e=0$. Moreover, a solution of problem \eqref{eq:bern_MFG}-\eqref{eq:bern_weights} with $\e=0$ gives a mixture satisfying \eqref{eq:bern_consistency}.
	\begin{proposition}\label{p:cons_MFG}
		We  have 
		\begin{itemize}
			\item[(i)]
			Let $\pi(x)=\sum_{k=1}^{K} \alpha_k \pi_k(x )$, with $\pi_k$ as in \eqref{eq:pi_k}, be a Bernoulli mixture   satisfying \eqref{eq:bern_consistency}.  Then,   the family of quadruples  $(V_k, \lambda_k, \pi_k, \a_k)$, $k=1,\dots,K$, with 
			\begin{equation}\label{eq:bern_sol}
				\begin{array}{l}
					V_k=(V_k^1,\dots,V_k^D), \quad V_k^d(x)=(\frac{1-2\mu_k^d}{2})^x(\frac{2\mu_k^d-1}{2})^{1-x}  , \,x\in \{0,1\},\\[8pt]
					\l_k=(\l_k^1,\dots,\l_k^D),\quad\lambda_k^d=0,\\[8pt]
					\pi_k=(\pi_k^1,\dots,\pi_k^D),\quad \,\pi^d_k(x)=(\mu_k^d)^{x}(1- \mu_k^d)^{1-x},\,x\in \{0,1\},
				\end{array}
			\end{equation}
			is a  solution of \eqref{eq:bern_MFG}-\eqref{eq:bern_weights} with $\e=0$ and $\theta_k^d$ as in \eqref{eq:bern_average}.
			\item[(ii)] Let $\{(V_k,\l_k,\pi_k,\a_k)\}_{k=1}^K$ be a solution of the  MFG system \eqref{eq:bern_MFG}-\eqref{eq:bern_weights} with $\e=0$ and $\theta_k^d$ as in \eqref{eq:bern_average}. Then $\pi(x)= \sum_{k=1}^{K}\alpha_k \pi_k(x)$ is a Bernoulli mixture verifying \eqref{eq:bern_consistency}.
		\end{itemize}
	\end{proposition}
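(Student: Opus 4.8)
The plan is to handle both directions with a single explicit computation, since once $\e=0$ the minimization defining the Hamiltonian in \eqref{eq:bern_MFG} collapses to an unconstrained convex quadratic in the control. I carry the indices $(k,d)$ silently and write $V_0=V_k^d(0)$, $V_1=V_k^d(1)$, $\D=V_0-V_1$, $\m=\m_k^d$, $\th=\th_k^d$. Expanding the bracket in the first line of \eqref{eq:bern_MFG} with $\e=0$ gives $g_0(p)=p^2+(\D-1)p+V_1$, a strictly convex parabola whose minimizer over $[0,1]$ is $p^*=\tfrac{1-\D}{2}$ (interior when $\D\in(-1,1)$), with minimal value $V_1-\tfrac{(\D-1)^2}{4}$; symmetrically the second line gives $g_1(q)=q^2-(\D+1)q+V_0$ with minimizer $q^*=\tfrac{1+\D}{2}$ and minimal value $V_0-\tfrac{(\D+1)^2}{4}$. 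Since $p^*+q^*=1$, the optimal transition matrix \eqref{eq:bern_trans_matrix} has two identical rows $(p^*,q^*)$, so its unique invariant law is $(\pi_k^d(0),\pi_k^d(1))=(p^*,q^*)$; in particular the Fokker--Planck equations in \eqref{eq:bern_MFG} hold automatically and $\m=\pi_k^d(1)=q^*=\tfrac{1+\D}{2}$.

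For part (i) I would run this backwards. From the data \eqref{eq:bern_sol} one reads $\D=2\m-1$, hence $p^*=1-\m$ and $q^*=\m$, and the Fokker--Planck equations together with the normalisations $\sum_x\pi_k^d(x)=1$ and $\sum_xV_k^d(x)=0$ then hold by inspection. The only real check is that the two Hamilton--Jacobi--Bellman equations close with $\l_k^d=0$: inserting the minimal values above into \eqref{eq:bern_MFG} reduces the first equation to $\m^2=\th^2$ and the second to $(1-\m)^2=(1-\th)^2$, both of which are true because \eqref{eq:bern_consistency}, read against the definition \eqref{eq:bern_average}, says precisely $\m_k^d=\th_k^d$. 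The coupling condition \eqref{eq:bern_weights} is literally the second line of \eqref{eq:bern_consistency}, so \eqref{eq:bern_sol} solves \eqref{eq:bern_MFG}--\eqref{eq:bern_weights}.

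For part (ii) I would run the same computation forwards. Given a solution with interior controls, insert the two minimal values into the Hamilton--Jacobi--Bellman equations of \eqref{eq:bern_MFG} and subtract them: the quadratic terms cancel and one is left with $2\D=\D+\th^2-(1-\th)^2$, that is $\D=2\th-1$, and substituting this back into either equation forces $\l_k^d=0$. Consequently $\m=\tfrac{1+\D}{2}=\th_k^d$, and since $\pi_k^d$ is a two-point law on $\{0,1\}$ it automatically has the Bernoulli form \eqref{eq:pi_k} with $\m_k^d=\th_k^d$, which by \eqref{eq:bern_average} equals $\bigl(\sum_n\g_k(x_n)x_n^d\bigr)/\bigl(\sum_n\g_k(x_n)\bigr)$, i.e. the first line of \eqref{eq:bern_consistency}; the second line is once more \eqref{eq:bern_weights}. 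Hence $\pi=\sum_k\a_k\pi_k$ verifies \eqref{eq:bern_consistency}.

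The step I expect to need the most care is the interiority of the controls. With $\e=0$ the entropy barrier that normally confines $p,q$ to $(0,1)$ is absent, so a priori a minimizer in part (ii) could sit at an endpoint; however, a boundary optimizer forces $\D\le-1$ or $\D\ge1$ and hence a degenerate parameter $\m\in\{0,1\}$, which is excluded once we restrict, as in the discussion preceding the statement, to critical parameters lying in $(0,1)$. Everything else is the routine quadratic algebra sketched above, whose one genuinely informative feature is the exact cancellation producing $\l_k^d=0$---this is precisely the role of the tailored coupling cost $\bigl((\th_k^d)^2,(1-\th_k^d)^2\bigr)$.
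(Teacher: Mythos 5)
Your proof is correct and follows essentially the same route as the paper's: an explicit computation of the minimizers $p^*=\tfrac{1-\Delta}{2}$, $q^*=\tfrac{1+\Delta}{2}$ of the two quadratics, verification that the stated $(V,\lambda,\pi)$ closes the system in part (i), and the reverse substitution giving $\Delta=2\theta_k^d-1$, $\lambda_k^d=0$, $\mu_k^d=\theta_k^d$ in part (ii). Your explicit treatment of the possible boundary minimizers when $\varepsilon=0$ (excluded by restricting to nondegenerate parameters in $(0,1)$) is a point the paper passes over silently, but it does not change the argument.
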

	\begin{proof}
		\textit{(i)}
		Consider a  mixture $\pi$ defined as in \eqref{eq:pi}-\eqref{eq:pi_k} and  satisfying \eqref{eq:bern_consistency}.
		Then, each component $\pi_k^d$ satisfies the 2-states  Fokker-Plank equation 
		\begin{equation}\label{eq:FP}
			\left\{
			\begin{array}{ll}
				\pi_k^d(0)= p \pi_k^d(0)+(1-q )\pi_k^d(1)\\[4pt]
				\pi_k^d(1)= (1-p )\pi_k^d(0)+q \pi_k^d(1)\\[8pt]
				\pi_k^d\ge 0, \ \sum_{x\in \{0,1\}}\pi_k^d(x)=1
			\end{array}
			\right.
		\end{equation}
		where the transition matrix $P$ is  defined as in \eqref{eq:bern_trans_matrix}  with $p=1-\mu_k^d$, $q=\mu_k^d$. Moreover, we observe that by condition \eqref{eq:bern_consistency}, the expected values $\th_k^d$ in \eqref{eq:bern_MFG}, defined in \eqref{eq:bern_average}, coincide with $\m_k^d$. Therefore the couple
		$(V_k^d(x),\lambda_k^d)=((\frac{1-2\mu_k^d}{2})^x(\frac{2\mu_k^d-1}{2})^{1-x},0), \ x \in [0,1]$ satisfies the first and second equations of system \eqref{eq:bern_MFG} with $\e=0$ and with optimal controls given by $p^*=1-\mu_k^d$ and $q^*=\mu_k^d$. 
		\\In addition, the homogeneous condition $\sum_{x \in [0,1]}V_k^d(x)=0$ is fulfilled.
		\\Hence, we conclude that the transition matrix $P$ in \eqref{eq:FP} is given by the optimal controls $p^*$ and $q^*$  and therefore \eqref{eq:bern_sol} is a solution of \eqref{eq:bern_MFG} with $\e=0$.
		\par
		\textit{(ii)} Let $\{(V_k,\l_k,\pi_k,\a_k)\}_{k=1}^K$ be a solution of the  MFG system \eqref{eq:bern_MFG}-\eqref{eq:bern_weights} with $\e=0$. Then, since $(V_k^d, \lambda_k^d)$ is the     solution to Hamilton-Jacobi-Bellman equation 
		\[
			\left\{
			\begin{array}{ll}
				V_k^d(0)= \min_{p \in [0,1]}\{p (-\frac{1-p}{2}+ V_k^d(0)) \\[4pt]
				\qquad+(1-p )(-\frac{p}{2}+ V_k^d(1))\}-\lambda_k^d +  (\theta_k^d)^2 \\[8pt]
				V_k^d(1)= \min_{q \in [0,1]} \{(1-q )(-\frac{q}{2}+  V_k^d(0))\\[4pt]
				\qquad + q (-\frac{1-q}{2}+ V_k^d(1))\}- \lambda_k^d +  (1- \theta_k^d)^2,\\[4pt]
				\sum_{x\in \{0,1\}}V_k^d(x)=0,
			\end{array}
			\right.
		\]
	we have   $(V_k^d, \lambda_k^d)= \bigg( (\frac{1-2\theta_d^k}{2})^x(\frac{2\theta_d^k-1}{2})^{1-x}, 0\bigg)$ with    the optimal controls $p$ and $q$  given by    
		\begin{equation}\label{eq:bern_optimal}
			p=1-\theta_k^d,\quad  q=\theta_k^d .
		\end{equation} 
		It follows that, given the transition matrix $P$ as in \eqref{eq:bern_trans_matrix} with $p$,$q$ as in \eqref{eq:bern_optimal},   the solution to the Fokker-Planck equation \eqref{eq:FP} is   a Bernoulli distribution $\pi_k^d(x)$ of parameter   $\mu_k^d= \theta_k^d$. Therefore, by \eqref{eq:bern_average},
		$\mu_k^d$ satisfies the first condition in \eqref{eq:bern_consistency}. Moreover, since the coefficients $\a_k$ are given by \eqref{eq:bern_weights},  also the second condition in \eqref{eq:bern_consistency} is satisfied. 
	\end{proof}
	\begin{remark}
	If $\e>0$, then it is possible to show  that the MFG system \eqref{eq:bern_MFG}  characterizes the critical points of the   functional 
		\begin{align*}
			\mathcal{\tilde{L}}_\e(\alpha,\mu; \mX)= \sum_{n=1}^{N}\sum_{k=1}^{K} & \gamma_k(x_n)\bigg(\log(\alpha_k)+ \sum_{d=1}^{D}x_n^d \log (f_\e(\mu_k^d))\\
			&+ (1- x_n^d)\log (f_\e(1- \mu_k^d))\bigg)
		\end{align*}
		where
		\[f_{\e}(\mu)=\mu+\frac{\e}{2}\log\left(\frac{\mu}{1-\mu}\right).\]
		Note that $f_\e(\mu)\in (0,1)$ for  $\mu\in (\d_\e,1-\d_{\e})$, for some appropriate constant $\d_\e\in(0,1)$ with $\lim_{\e\to 0^+}\d_\e=0$. Hence the functional  $\mathcal{\tilde{L}}_\e$ is defined for $\mu\in (\d_\e,1-\d_{\e})$ and, for $\e>0$, the MFG system \eqref{eq:bern_MFG} gives non degenerate multinomial  Bernoulli distributions.
	\end{remark}
	\section{A Mean Field Games approach   to mixture models of categorical distributions}\label{sec:categorical}
	In the  classification of handwritten digits previously described, the components of the random variable $X$, which generates the data set $\mX$, can take only one of two possible values, i.e. $0$ and $1$ corresponding to  a white or a black pixel. 
	In other models, see for example  the Fashion-MNIST dataset \cite{url_FashionMINST},   a  discrete random variable  can take only one of a certain number $S$ of mutually exclusive  states. In this case, $X$ is called a categorical random variable. In this section, we introduce a MFG approach to the optimization of  the parameters for a mixture of   categorical distributions, i.e. discrete probability measures associated with a categorical random variable. \\
	We denote by $\cS=\{1,\dots,S\}$ the state space of a component $X^d$ of $X$.
	A probability measure on the state space $\cS$ can be identified with a vector $p\in \mS$, where  
	$\mS=\{p=(p(1),\dots,p(S)):\, p(i)\ge 0,\,\, \sum_{i=1}^Sp(i)=1\}$
	is the probability simplex.  The set of the  $S\times S$ stochastic matrices is identified with $\maS$.\\
	We assume that the   data set $\mX$ is generated  by   the i.i.d. observations  of a random variable $X$ in $\R^D$, whose    components $X^d$    are categorical random variables with values in $\cS$ and independent from each other. In order to find a   representation of the distribution of $X$, we consider the  mixture 
	\begin{equation}\label{eq:cat_pi}
		\pi(x)= \sum_{k=1}^{K} \alpha_k \pi_k(x ),\quad  x=(x^1,\dots,x^D)\in \cS^D,
	\end{equation}
	where each measure $\pi_k$ is given by
	\begin{equation}\label{eq:cat_pi_k}
		\pi_k(x)=\prod_{d=1}^{D}\pi_k^d(x^d),\, 
	\end{equation} 
	with $\pi_k^d \in\mS$, $d=1,\dots,D$,   a  categorical distribution.   Note that \eqref{eq:cat_pi_k} is consequence of  the assumption that the components of $X$ are mutually independent. We denote with $\cP$ the space of the multinomial categorical measures, i.e. the space of the  measures defined as in \eqref{eq:cat_pi_k}. We also identify a measure $\pi_k$ on $\cS^D$ with a vector $\pi_k=(\pi_k^1,\dots,\pi_k^D)$ such that  $\pi_k^d\in\mS$. 
	\begin{remark}
		The Fashion-MNIST dataset  is composed of grey-scale, $28\times 28$ pixels images of $10$ different types of fashion products. In this case we have  $D=784$, corresponding to the pixels of the images, $S=256$, corresponding to the grey-scale levels of a single pixel, and $K=10$, corresponding to the different fashion objects in the dataset. 
	\end{remark}
	\noindent
	Given the data set $\mX=\{x_n\}$, for $k=1,\dots,K$ we define  the responsibilities  by
	\begin{equation}\label{eq:cat_resp}
		\g_k(x_n)=\PP[Y_k=1|X=x_n]=\frac{\a_k \pi_k(x_n)}{\pi(x_n)}\qquad \,n=1,\dots,N.
	\end{equation} 
	and the vector $\theta_k=(\theta_k^1,\dots,\theta_k^D)\in \mS^D$,   $\theta_k^d= (\theta_k^d(1), \dots, \theta_k^d(S)) \in \mS$, by
	\begin{equation}\label{eq:cat_average}
		\theta_k^d(i)=\dfrac{\sum_{n=1}^{N}\g_k(x_n)[x_{n}^d=i]}{\sum_{n=1}^{N}\g_k(x_n)}  \quad i=1,\dots,S
	\end{equation}
(here the Iverson brackets $[x_n^d=i]$ evaluate $1$ if the realization of the random variable $X^d$ corresponding to the $n^{th}$ entry of the data set $\mathcal{X}$ assumes state $i$,  $0$ otherwise).  The vector $\theta_k$ represents  the weighted probabilities of the random variable  $X^d$ with respect to the $k^{th}$ component of the mixture computed on the data set $\mX$.
	
	For $k=1,\dots,K$, $d=1,\dots,D$ and $i=1,\dots,S$, we consider the multi-population MFG system
	\begin{equation}\label{eq:cat_MFG}
		\left\{
		\begin{array}{ll}
			V_k^d(i)=\displaystyle\min_{P_i: P_{ij}\geq 0, \ \sum_j P_{ij}=1}\left\{\sum_{j=1}^S P_{ij} \left(c (P_{ij})+\e\log(P_{ij})+F(i, \theta_k^d)+V_k^d(j)\right)\right\}-\l_k^d,\\ [8pt]
			\pi_k^d(i)= \sum_{ j=1}^S P^k_{ji}\pi_k^d(j),\\[8pt]
			\pi_k^d(i)\ge 0,\, \sum_{i=1}^S\pi_k^d(i)=1, \,\sum_{i=1}^S V_k^d(i)=0,\\[8pt]
			\alpha_k=\frac{1}{N}\sum_{n=1}^N\g_k(x_n),
		\end{array}
		\right.
	\end{equation}
	where  the transition matrix $P^k$ in the Fokker-Planck equation is composed of rows $P^k_i=\{P^k_{ij}\}_{j=1}^S$ which realize  the minimum in the Hamilton-Jacobi-Bellman equation.  Note that, for the optimal values of $P_i$, $c(P_{ij})$  is a transition cost from the state $i$ to the state $j$ and the entropy	penalization term $\e\log(P_{ij})$ enforces the agent to diversify their choices, so that $P_{ij}>0$ for any $i,j=1,\dots,S$. We assume that   $c\in C^1([0,1])$ with $pc(p)$   convex  for $p\in [0,1]$ and $F:\mathcal{S}\times \mathbb{S}\rightarrow \mathbb{R}$ is such that $F(i, \cdot)$ is bounded and continuous for all $ i  \in \mathcal{S}$.\\
	A solution of \eqref{eq:cat_MFG} is a family of quadruples  $(V_k, \lambda_k, \pi_k, \a_k)$, $k=1,\dots,K$, where 
	\begin{equation*}
		\begin{array}{ll}
			V_k=(V_k^1,\dots,V_k^D), \, &V_k^d =(V_k^d(1),\dots, V_k^d(S) )\in\R^S,\,d=1,\dots,D;\\
			\l_k=(\l_k^1,\dots,\l_k^D),&\lambda_k^d\in \R,\,d=1,\dots,D;\\
			\pi_k=(\pi_k^1,\dots,\pi_k^D),&\pi^d_k=(\pi_k^d(1),\dots, \pi_k^d(S))\in\mS,\,d=1,\dots,D;\\
			\a_k\in [0,1].
		\end{array}
	\end{equation*}
	As observed in the previous section, for fixed $d\in \{1,\dots,D\}$,     \eqref{eq:cat_MFG} gives a $K$-populations MFG system on a   $S$-states space. The system  is globally coupled by means of the   vectors $\theta_k^d$, which depend on the full measure $\pi$ via  the responsibilities $\gamma_k$, and by means  of the weights $\alpha_k$. Note that, a priori, the responsibilities are not well  defined since $\pi$ could vanish for some $x_n\in\mX$. However, we will prove in Theorem \ref{thm:cat_exixts} that, due to the entropy penalization term, there exists a solution of \eqref{eq:cat_MFG} for which $\pi$ cannot vanish on the data set $\mX$ and therefore $\g_k$ and $\th_k^d$ are well defined.
	\begin{theorem} \label{thm:cat_exixts}
		For any $\e>0$, there exists a solution $(V_k, \lambda_k, \pi_k, \alpha_k) $, $k=1,\dots,K$, of \eqref{eq:cat_MFG}.
	\end{theorem}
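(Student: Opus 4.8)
The plan is to recast \eqref{eq:cat_MFG} as a fixed-point equation for the coupling data and to invoke Brouwer's theorem. I take as unknowns the pair $(\a,\th)$, where $\a=(\a_1,\dots,\a_K)$ ranges over the simplex $\Delta=\{\a:\a_k\ge 0,\ \sum_{k=1}^K\a_k=1\}$ and $\th=(\th_k^d)_{k,d}$ ranges over $(\mS)^{K\times D}$, so that the domain $\cK=\Delta\times(\mS)^{K\times D}$ is compact and convex. Given $(\a,\th)\in\cK$ I will construct a map $\Phi(\a,\th)=(\a',\th')$ whose fixed points are exactly the solutions of \eqref{eq:cat_MFG}.

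The first ingredient is the \emph{decoupling step}. For each $k$ and $d$, freeze the coupling cost $F(\cdot,\th_k^d)$ and solve the single-population, $S$-state stationary MFG given by the first three lines of \eqref{eq:cat_MFG}. By the results of Appendix \ref{sec:appendix}, this problem admits a unique solution $(V_k^d,\l_k^d,\pi_k^d)$ depending continuously on $\th_k^d$: strict convexity of $p\mapsto pc(p)+\e p\log p$ (using that $pc(p)$ is convex and $p\log p$ is strictly convex) forces the row-minimizers $P_i$ in the Hamilton--Jacobi--Bellman equation to be unique, while the entropy penalization $\e\log P_{ij}$ drives the optimal entries away from zero, yielding a bound $P_{ij}\ge\d>0$ with $\d=\d(\e)$ independent of $\th$ because $F(i,\cdot)$ is bounded. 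Since $\pi_k^d$ is the invariant measure of the stochastic matrix $P^k$, the identity $\pi_k^d(i)=\sum_j P^k_{ji}\pi_k^d(j)\ge\d\sum_j\pi_k^d(j)=\d$ gives the uniform lower bound $\pi_k^d(i)\ge\d>0$.

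The second ingredient is the \emph{coupling step}. Because every $\pi_k^d$ is bounded below by $\d$, the mixture $\pi(x_n)=\sum_k\a_k\prod_d\pi_k^d(x_n^d)$ is strictly positive on the whole data set (for any $\a\in\Delta$), so the responsibilities \eqref{eq:cat_resp} are well defined and continuous in $(\a,\pi)$. I then set $\a_k'=\tfrac1N\sum_n\g_k(x_n)$; summing over $k$ and using $\sum_k\g_k(x_n)=1$ shows $\sum_k\a_k'=1$, so $\a'\in\Delta$. For $\th'$ I use \eqref{eq:cat_average}; the apparent $0/0$ on the face $\{\a_k=0\}$ is harmless, since cancelling $\a_k$ gives the equivalent expression $\th_k^{d\,\prime}(i)=\big(\sum_n\tfrac{\pi_k(x_n)}{\pi(x_n)}[x_n^d=i]\big)\big/\big(\sum_n\tfrac{\pi_k(x_n)}{\pi(x_n)}\big)$, whose numerator and denominator are strictly positive and continuous on all of $\cK$. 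Hence $\th_k^{d\,\prime}\in\mS$ and $\Phi$ maps $\cK$ into itself.

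Finally, composing the continuous map $\th_k^d\mapsto(V_k^d,\l_k^d,\pi_k^d)$ (continuity from uniqueness plus the stability estimate of Appendix \ref{sec:appendix} and continuity of $F(i,\cdot)$) with the continuous responsibility and averaging formulas shows that $\Phi:\cK\to\cK$ is continuous. Brouwer's fixed-point theorem then produces $(\a^*,\th^*)$ with $\Phi(\a^*,\th^*)=(\a^*,\th^*)$, and unwinding the construction shows that the associated quadruples $(V_k,\l_k,\pi_k,\a_k)$ satisfy every line of \eqref{eq:cat_MFG}. I expect the main obstacle to be the decoupling step: one must extract from the entropy term a lower bound on the optimal transition probabilities that is uniform in the coupling $\th$, since this single estimate is what simultaneously guarantees that the responsibilities never become singular and that $\Phi$ is continuous on a fixed compact domain. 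The boundary behaviour of $\th$ on the face $\{\a_k=0\}$ is a secondary technical point, handled by the cancellation above.
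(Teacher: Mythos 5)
Your proposal is correct and follows essentially the same route as the paper: freeze the coupling data, solve the decoupled single-population $S$-state systems using the appendix results (uniqueness of the Nash minimizer, continuity in $\theta$, and the entropy-induced positive lower bound on the invariant measure that keeps the responsibilities nonsingular), and close the loop with Brouwer's fixed-point theorem. The only differences are cosmetic: you iterate on $(\alpha,\theta)$ over the full product of simplices while the paper iterates on $(\alpha,\pi)$ over a $\delta$-truncated set with $\delta$ chosen below the positivity constant $C(\varepsilon)$, and your explicit cancellation of $\alpha_k$ in the formula for $\theta_k^d$ on the face $\{\alpha_k=0\}$ makes precise a degenerate case the paper leaves implicit.
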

	\begin{proof}
		We define the
		\begin{equation}\label{eq:cat_FP_space}
			\begin{split}
				\cD=\Big \{ (\a,\pi)=&(\a_1,\dots,\a_K,\pi_1,\dots,\pi_K):\, \a_k\in [0,1],\, \sum_{k=1}^K\a_k=1,  \\
				& \pi_k=\prod_{d=1}^{D}\pi_k^d\in\cP, \,\min_{i\in\cS}\pi_k^d(i)\ge \d>0,\, d=1,\dots,D\Big\},
			\end{split}
		\end{equation} 
		where $\d$ is a constant to be fixed later.   It is easy to see that $\cD$ is a convex  and compact set  with
		respect to the topology  of  $\R^K\times (\mS^D)^K$. We define a map $\Psi$ on $\cD$ in the following way:\\
		Given $(\a,\pi)\in\cD$, we set for $k=1,\dots,K$
		\begin{align}
			&\g_k(x_n)=\frac{\a_k\pi_k(x_n)}{\sum_{k=1}^K\a_k\pi_k(x_n)}, \quad n=1, \dots, N\label{eq:cat_FP_resp}\\
			&\theta_k^d(i)=\frac{\sum_{n=1}^{N}\g_k(x_n)[x_{n}^d=i]}{\sum_{n=1}^N\g_k(x_n)}, \quad d=1,\dots,D \quad i =1, \dots S \label{eq:cat_FP_average}
		\end{align}
		and $\theta_k^d=(\theta_k^d(1), \dots, \theta_k^d(S))$.\\
		Note that, since $\min_{i\in\cS}\pi_k^d(i)\ge \d>0$, then $\pi_k(x)\ge \d^D$ for any $x\in\cS^D$, and
		therefore 
		\begin{equation}\label{eq:cat_FP_lower}
			\sum_{k=1}^K\a_k\pi_k(x_n)\ge \d^D,\qquad n=1,\dots,N.
		\end{equation}
		Hence $\g_k$ and   $\theta_k^d$ in \eqref{eq:cat_FP_resp}-\eqref{eq:cat_FP_average} are well defined. 
		For each $d=1,\dots,D$, we consider the $K$-populations $S$-states space MFG system
		\begin{equation}\label{eq:cat_FP_MFG}
			\left\{
			\begin{array}{ll}
				V_k^d(i)=\displaystyle\min_{P_i: P_{ij}\geq 0, \ \sum_j P_{ij}=1}\left\{\sum_{j=1}^S P_{ij}\left(c(P_{ij})+\e\log(P_{ij})+F(i, \theta_k^d)+V_k^d(j)\right)\right\}-\l_k^d\\ [8pt]
				\rho_k^d(i)= \sum_{j=1}^S P^k_{ji}\,\rho_k^d(j)\\[8pt]
				\rho_k^d(i)\ge 0,\, \sum_{i=1}^S\rho_k^d(i)=1, \,\sum_{i=1}^S V_k^d(i)=0
			\end{array}
			\right.
		\end{equation}
		for $i=1,\dots,S$, where  the transition matrix $P^k$ in the Fokker-Planck equation is composed of rows $P^k_i=\{P^k_{ij}\}_{j=1}^S$ which realize  the minimum in the Hamilton-Jacobi-Bellman equation.
		Since the coefficients $\theta_k^d(i)$ are  given (see \eqref{eq:cat_FP_average}), the previous systems are not coupled with respect to the index $d$. By Theorem \ref{app:theorem_positivity}, for any $d=1,\dots,D$, there exists  a unique solution $(V_k^d,\l_k^d,\rho_k^d)$ to \eqref{eq:cat_FP_MFG}
		with 
		\begin{equation}\label{eq:cat_FP_positiv}
			\min_{i\in\cS}\rho_k^d(i)\ge C(\e)>0,\qquad  k=1,\dots,K.
		\end{equation} 
		Set  $\rho=(\rho_1,\dots,\rho_K)$,  where for any $k=1,\dots,K$ the vector $\rho_k=(\rho_k^1,\dots,\rho_k^D)\in \cP$ has components  given by the solutions $\rho^d_k$ of \eqref{eq:cat_FP_MFG}, 
		and 
		\begin{equation*}
			\b_k=\frac{1}{N}\sum_{n=1}^N\g_k(x_n).
		\end{equation*}
		Then, the map $\Psi$ is defined by $\Psi(\a,\pi)=(\b,\rho)$. Fixing the constant $\d$ in \eqref{eq:cat_FP_space} smaller than the constant  $C(\e)$ in \eqref{eq:cat_FP_positiv}, then it follows that $\Psi$ maps the set $\cD$ into itself. We prove that $\Psi$ is a continuous map on $\cD$    with  respect to the topology of $\R^K\times(\mS^D)^K$. Consider a sequence $(\a^{(h)},\pi^{(h)})\in\cD$, $h\in\N$, converging to $(\a,\pi)\in \cD$
		and denote by $(\rho^{(h)},\b^{(h)})$, $(\rho,\b)$ the corresponding images by means of $\Psi$. Given the vector $(\theta_{k}^d)^{(h)}$ whose components are defined by 
		\[(\theta_{k}^d)^{(h)}(i)=\frac{\sum_{n=1}^{N}\g_{k}^{(h)}(x_n)[x_{n}^d=i]}{\sum_{n=1}^N\g_{k}^{(h)}(x_n)}, \quad k=1,\dots,K,\,d=1,\dots,D,\]
		and recalling  \eqref{eq:cat_FP_lower}, it is immediate that $(\theta_{k}^d)^{(h)}(i)\to\theta_k^d(i)$ for $h\to\infty$, and hence $(\theta_{k}^d)^{(h)} \to \theta_k^d$ for $h \to \infty$ for any $k=1,\dots,K$, $d=1,\dots,D$. By   continuity,   $F(i, (\theta_{k}^d)^{(h)})$ converges 
		to $F(i, \theta_k^d)$.
		By Lemma \ref{app:lemma_HJB} and \ref{app:lemma_continuity_Nash}, it follows that the vectors $P^{(h)}_i$, which attain  the minimum in the first equation of \eqref{eq:cat_FP_MFG} for $((V_{k}^d)^{(h)},(\l_{k}^d)^{(h)})$, converge  to the corresponding vector $P_i$ which attains the minimum in the first equation of \eqref{eq:cat_FP_MFG} for $(V_k^d,\l_k^d)$. Since the transition matrices $P^{(h)}$, composed of the rows $P^{(h)}_i$, $i=1,\dots,S$, converge to the transition matrix $P$, composed of the rows $P_i$, $i=1,\dots,S$ , it follows that the corresponding invariant distribution $(\rho_{k}^d)^{(h)}$ converge to $\rho_k^d$, $d=1,\dots,D$ and therefore $(\rho_{k})^{(h)}=\prod_{d=1}^D (\rho_{k}^d)^{(h)}$ converges to $\rho_k=\prod_{d=1}^D \rho_k^d$. Moreover, recalling \eqref{eq:cat_FP_positiv}, it follows that $\b_{k}^{(h)}$ converges to $\b_k$. Hence the map $\Psi$ is continuous.\\
		By the Brouwer's fixed point Theorem, there exists $(\a,\pi)\in\cD$ such that $\Psi(\a,\pi)=(\a,\pi)$ and therefore
		a solution to \eqref{eq:cat_MFG}.
	\end{proof}
	Note that, in this framework, it is, in general, not reasonable  to obtain uniqueness for the system \eqref{eq:cat_MFG}, since  the  log-likelihood functional  can not be concave. This also corresponds to the fact that, in cluster analysis, there may be several admissible partitions of a given data set.
	As for the case $S=2$, see Proposition \ref{p:cons_MFG}, it is possible to show that, identifying the state $i\in \cS$   with the vertex  of coordinate 
	$$T_i=(0,\dots,\underset{i}{1},\dots,0)$$
	of the   simplex in $\R^S$, taking $c(p)=-(1-p)/2$ and $F(i,\th)$ equal to the square of   distance of $\th$ from $T_i$, then the solutions of the MFG system \eqref{eq:cat_MFG} for $\e=0$ 
	are associated with the critical points of the  the log-likelihood functional for a mixture of categorical distributions.
	\begin{remark}
		Theorem \ref{thm:cat_exixts}	 provides an existence result to the MFG system \eqref{eq:cat_MFG} only  for $\e>0$.
		For $\e=0$, estimate \eqref{eq:cat_FP_positiv} no longer holds and  \eqref{eq:cat_FP_lower} no longer applies. Therefore, it is not guaranteed that the responsibilities $\g_k$ in \eqref{eq:cat_resp} are well defined. 
		On the other hand, for $\e \to 0$, it is possible to prove that, up to a subsequence, the vector $\th_k^d$, defined in \eqref{eq:cat_average} converges to a vector $\bar\th_k^d \in \mS$ and, moreover, the  solution  of  \eqref{eq:cat_MFG} converges to a solution of \eqref{eq:cat_MFG} with $\e=0$ and cost $F(i,\bar\th_k^d)$. The point is  that we cannot   characterize $\bar\th_k^d$ by a formula equivalent to \eqref{eq:cat_average} since again we cannot exclude that the  responsibilities $\g_k$,  corresponding to the solution of the limit system, vanish for some value $x_n\in\mX$.
	\end{remark}
	\begin{remark}
		For the simplicity of the notation, we assumed that all the components $X^d$ of the random variable $X$, which generates the data set,  are $S$-dimensional categorical random variables. Nevertheless, it is possible to assume that they have   state spaces of different dimension $S_d$, for each $d=1,\dots,D$. In this case, also  the MFG systems \eqref{eq:cat_MFG} are defined on   states spaces of dimension $S_d$ and the results of this section can be easily reformulated in this more general framework. 
	\end{remark}

	\section{Numerical approximation and examples}\label{sec:numerics}
	In this section we present the main idea for the numerical solution of the MFG system \eqref{eq:cat_MFG}, then we apply the resulting algorithm to some classical tests in cluster analysis. For the reader's convenience, we recall the MFG system here: 
	for $k=1,\dots,K$, $d=1,\dots,D$ and $i=1,\dots,S$, 
	$$
	\left\{
	\begin{array}{ll}
	V_k^d(i)=\displaystyle{\min_{P_i: \ P_{ij}\ge 0, \sum_j P_{ij}=1}}\left\{\sum_{j=1}^S P_{ij} \left(c (P_{ij})+\e\log(P_{ij})+F(i, \theta_k^d)+V_k^d(j)\right)\right\}-\l_k^d,\\ [8pt]
	\pi_k^d(i)= \sum_{ j=1}^S P^k_{ji}\pi_k^d(j),\\[8pt]
	\pi_k^d(i)\ge 0,\, \sum_{i=1}^S\pi_k^d(i)=1, \,\sum_{i=1}^S V_k^d(i)=0,\\[8pt]
	\alpha_k=\frac{1}{N}\sum_{n=1}^N\g_k(x_n),
	\end{array}
	\right.
	$$
	As discussed in Section \ref{sec:categorical}, this system depends on the dimension $S$ of the state space, the number $K$ of populations, and also on the number $D$ of components of the categorical random variable associated to the mixture. This yields a possibly huge non linear problem of size $S\times K\times D$, which is fully coupled, since the parameters $\theta_k$,  $k=1,\dots,K$,     appearing in the cost $F$ depend, through the responsibilities $\gamma_k$, on the whole mixture $\pi$. Moreover, each Hamilton-Jacobi-Bellman equation in the system includes an optimization problem for the optimal controls in the transition matrix $P$.
	
	Following the strategy of the classical EM algorithm, we can mitigate the computational efforts by reducing the above system to $K\times D$ independent sub-systems of size $S$. More precisely, we devise the following iterative procedure, in which each iteration is split into two steps: starting from an arbitrary guess  $\a^{(0)}_k$, $\pi^{(0)}_k$ for the components of the mixture \eqref{eq:cat_pi}, iterate on $h\ge 0$: \\
	
	\noindent E-step: for $k=1,\dots,K$ compute the new responsibilities and weights
	\[\g_k^{(h+1)}(x_n)=\frac{\a^{(h)}_k\pi^{(h)}_k(x_n)}{\sum_{k=1}^K\a^{(h)}_k\pi^{(h)}_k(x_n)}\,,\qquad \alpha_k^{(h+1)}=\frac{1}{N}\sum_{n=1}^N\g^{(h+1)}_k(x_n),\]
	and the  parameters $\th_k$ of the components of the mixture
	\[(\theta_k^d)^{(h+1)}(i)=\frac{\sum_{n=1}^{N}\g^{(h+1)}_k(x_n)[x_{n}^d=i]}{\sum_{n=1}^N\g^{(h+1)}_k(x_n)}, \quad d=1,\dots,D, \quad i =1, \dots S. \]
	\noindent M-step: for $k=1,\dots,K$, $d=1,\dots,D$ solve the $S$-state space MFG sub-system
	\[
	\left\{
	\begin{array}{ll}
	V_k^d(i)=\displaystyle{\min_{P_i:\,P_{ij}\ge 0, \sum_j P_{ij}=1}}\left\{\sum_{j=1}^S P_{ij} \left(c (P_{ij})+\e\log(P_{ij})+F(i, (\theta_k^d)^{(h+1)})+V_k^d(j)\right)\right\}-\l_k^d,\\ [8pt]
	\pi_k^d(i)= \sum_{ j=1}^S P^k_{ji}\pi_k^d(j),\\[8pt]
	\pi_k^d(i)\ge 0,\, \sum_{i=1}^S\pi_k^d(i)=1, \,\sum_{i=1}^S V_k^d(i)=0,
	\end{array}
	\right.
	\]
	to obtain the new mixture components $\pi^{(h+1)}_k$.\\
	
	For a fixed tolerance $\tau>0$, convergence can be checked by evaluating in a suitable norm the condition $\|\theta^{(h+1)}-\theta^{(h)}\|<\tau$. For instance one can reinterpret $\theta^{(h)}$ as a vector in $\R^{S\times K \times D}$ and simply take the Euclidean norm.  
	
	Note that in this EM-like formulation, the coupling is all embedded in the E-step, whereas the M-step can be completely parallelized. Moreover, each MFG sub-system indexed by $(k,d)$ is also decoupled, since the dependency of $(\theta_k^d)^{(h+1)}$ on $\pi$ is frozen at the previous iteration. Hence, the building block of the algorithm is just to solve the Hamilton-Jacobi-Bellman and the Fokker-Planck equations separately (we remove the indices $k, d, h$ to simplify the notation):
	\[
	\left\{
	\begin{array}{ll}
	V(i)=\displaystyle{\min_{P_i:\,P_{ij}\ge 0, \sum_j P_{ij}=1}}\left\{\sum_{j=1}^S P_{ij} \left(c (P_{ij})+\e\log(P_{ij})+F(i,\theta(i))+V(j)\right)\right\}-\l,\\ [8pt]
	\sum_{i=1}^S V(i)=0,
	\end{array}
	\right.
	\]
	
	\[
	\left\{
	\begin{array}{ll}
	\pi(i)= \sum_{ j=1}^S P_{ji}\pi(j),\\[8pt]
	\pi(i)\ge 0,\, \sum_{i=1}^S\pi(i)=1.
	\end{array}
	\right.
	\]
where  the transition matrix $P$ in the Fokker-Planck equation is composed of rows $P_i=\{P_{ij}\}_{j=1}^S$ which realize  the minimum in the HJB equation.	For the Hamilton-Jacobi-Bellman equation, we employ a standard policy iteration algorithm. More precisely, starting from a guess for the optimal transition matrix $P^{(0)}$, we introduce an inner iteration $m\ge 0$ by taking $P^{(m)}$ as minimizer:
	\[
	\left\{
	\begin{array}{ll}
	V^{(m)}(i)=\sum_{j=1}^S P_{ij}^{(m)} \left(c (P_{ij}^{(m)})+\e\log(P_{ij}^{(m)})+F(i,\theta(i))+V^{(m)}(j)\right)-\l^{(m)},\\ [8pt]
	\sum_{i=1}^S V^{(m)}(i)=0.
	\end{array}
	\right.
	\]
	This results in a very simple linear system of size $S+1$ for the unknowns $(V^{(m)}(1),\dots,V^{(m)}(S),\l^{(m)})$, whose solution is then plugged back in the optimization problem for $P$ to get $P^{(m+1)}$:
	\[
	P^{(m+1)}=\arg\min_{P_i: P_{ij} \geq 0, \ \sum_{j}P_{ij}=1}\left\{\sum_{j=1}^S P_{ij} \left(c (P_{ij})+\e\log(P_{ij})+F(i,\theta(i))+V^{(m)}(j)\right)\right\}\,.
	\]
	Iterations on $m$ are performed up to convergence  $\|P^{(m+1)}-P^{(m)}\|<\tau$.\\
	Note that, under the assumptions made on the transition cost, each optimization problem for $P$ is convex with linear constraints $\sum_{i=1}^S P_{ij}=1$, and it can be readily solved with classical algorithms.
	
	Finally, the optimal transition matrix $P$ is plugged in the Fokker-Plank equation, yielding again a simple linear system with linear constraints.
	
	We have to remark that, at present, the proposed algorithm is just heuristic, supported by numerical evidence only, and a complete proof for its convergence is still under investigation. While convergence of the policy iteration method for Hamilton-Jacobi-Bellman equations is a quite well established subject in the literature, at least in the continuous setting (see \cite{bellman,fleming,howard,puterman,puterman1}), a policy iteration method for MFG systems seems new and not much explored yet (see \cite{ccg20}). Here, further difficulties appear in the additional coupling with the expectation step of the algorithm. Nevertheless, we stress that, at this early stage, the aim of the present work is to provide a new perspective to cluster analysis, more than introduce a new method computationally competitive with classical algorithms. We believe that a better understanding of the proposed MFG formulation of the problem, including the case of more general costs, and an analysis on how these choices affect the resulting clusterization, could reveal additional features of the data set, exploiting its structure in a deeper way.
	
	We now consider some classical examples in cluster analysis, in order to show that the MFG approach produces reasonable results. It is important to remark that the solution of the full $K$-populations MFG system is in general not unique, hence the numerical solution obtained with the proposed algorithm depends on the choice of the initial guess $\a^{(0)}_k$, $\pi^{(0)}_k$, for $k=1,\dots,K$. In the following experiments we always choose $\a^{(0)}_k=1/K$, while  $\pi^{(0)}_k$ is built using random numbers in $(0,1)$. Moreover, we choose the costs $c(p)=-(1-p)/2$ and $F(i,\th)=|\th-T_i|^2$, identifying the state $i\in \cS$   with the vertex  of coordinate $T_i=(0,\dots,1,\dots,0)$
	of the   simplex in $\R^S$.  We also set $\varepsilon=0.05$ in order to justify the computational efforts of our algorithm, namely ensuring that our solution does not coincide with the explicit one produced by the EM algorithm for $\varepsilon=0$. 
	A detailed comparison with the EM and other classical algorithms, including convergence rates and performance tests, is beyond the scope of the present paper, and it will be addressed in a more computational oriented work.
	
 We first consider the case of Bernoulli mixtures, i.e. $S=2$, taking as dataset the MNIST database of handwritten digits \cite{url_MINST}, see Figure \ref{fig:bern_example_digits}. We recall that the database contains $60000$ images of the digits $\{\mathbf{0},\dots,\mathbf{9}\}$, each composed by $28\times 28$ pixels in $256$ grey levels, that we turn (via hard-threshold) in monochrome images and represent by binary vectors of size $D=784$.  Moreover, we remark that each sample in the database is already labelled by the number of the corresponding digit, that we use to check the correctness of the clusterization. To this end, given $1\le K\le 10$, we select $K$ digits $d_1,\dots,d_K$ in $\{\mathbf{0},\dots,\mathbf{9}\}$, and we run our algorithm to compute the Bernoulli parameters $\mu_k$ of the mixture components, for $k=1,\dots,K$. Then we the  introduce a matrix $H$ of size $K\times K$, whose entries are obtained as follows. For each sample $x$ of type $d_k$, we compute the corresponding responsibilities $\gamma_1(x),\dots,\gamma_K(x)$, and we accumulate these values in the $k$-th row of $H$, normalizing their sum with respect to the number of samples in $d_k$. More precisely, for $k,j=1,\dots,K$ 
	$$
	H_{kj}=\frac{1}{|d_k|}\sum_{x\in d_k}\gamma_j(x)
	$$
	provides the averaged probability, for a sample of type $d_k$, of belonging to the cluster $j$. Up to a permutation of the rows, namely a reordering of the mixture components, we can always assume that the maximal values  of $H$ correspond to the diagonal entries, so that digits of type $d_k$ will belong, with the highest probability, to the cluster $k$.  In a perfect clusterization, $H$ is clearly the identity matrix, but we recall that each Bernoulli distribution $\pi_k$ is built as a joint probability of all the observed pixel values \eqref{eq:pi_k}. Since the samples of type $d_k$ can be very different from each other, and also share some similarity with samples of other types (see Figure \ref{fig:nonhomo-digits}), we can never expect such a sharp partition. 
	
	For visualization purposes we represent each row of the matrix $H$ as a histogram, reporting on the $x$-axis the type $d_k$, for $k=1,\dots,K$, and assigning $K$ different colors to the values $H_{kj}$ for $j=1,\dots,K$, corresponding to the $K$ clusters $C_1,\dots,C_K$. Moreover, since the state space dimension is $S=2$, we can conveniently represent the parameters $\mu_k\in[0,1]^D$ of the corresponding Bernoulli distributions in the mixture, for $k=1,\dots,K$, as grey scale images. 
	\begin{figure}[!h]
		\centering
		\includegraphics[width =0.9\textwidth]{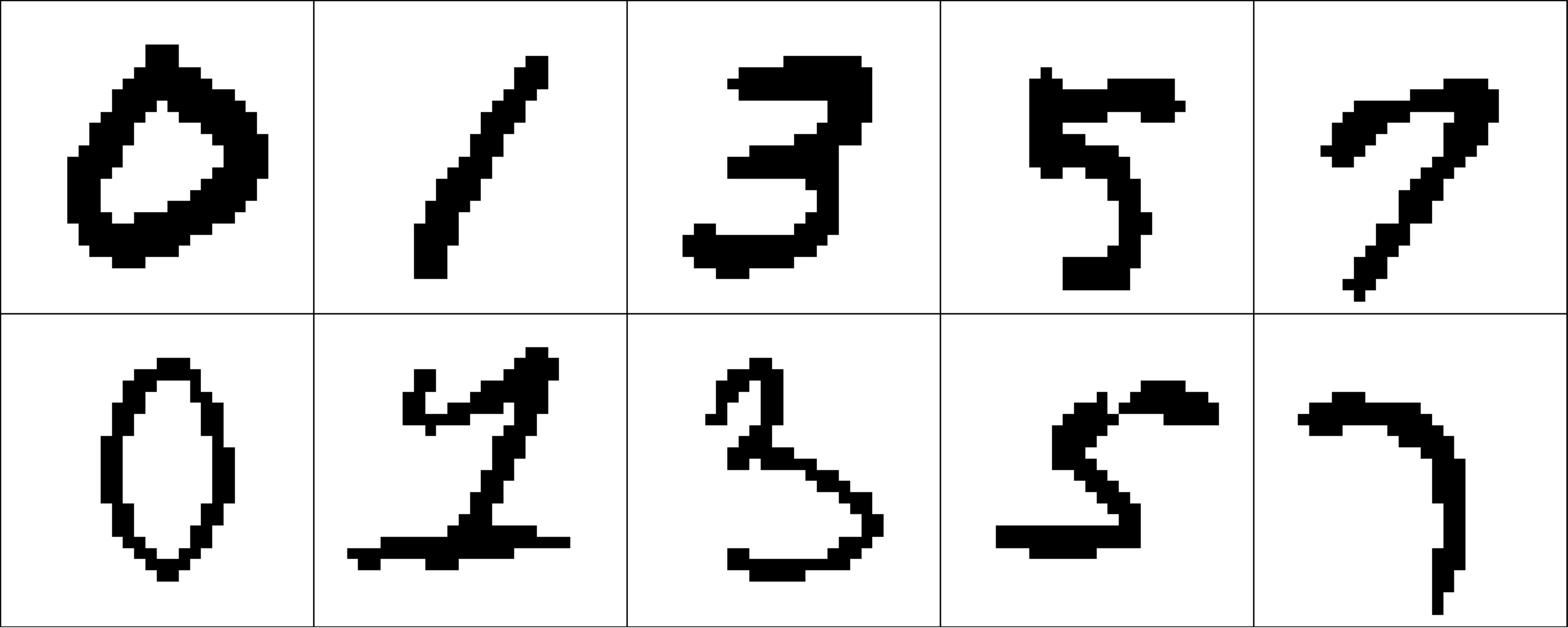}
		\caption{Different samples of hand-written digits from the MNIST database}
		\label{fig:nonhomo-digits}
	\end{figure}

	Let us start with the case $K=2$, choosing the digits $\mathbf{1},\mathbf{3}$. In Figure \ref{fig:test1}, we show the clusterization histogram and the corresponding Bernoulli parameters, observing that the samples of the two digits are very well separated. 
	\begin{figure}[!h]
		\centering
		\begin{tabular}{c}
			\includegraphics[width=0.75\textwidth]{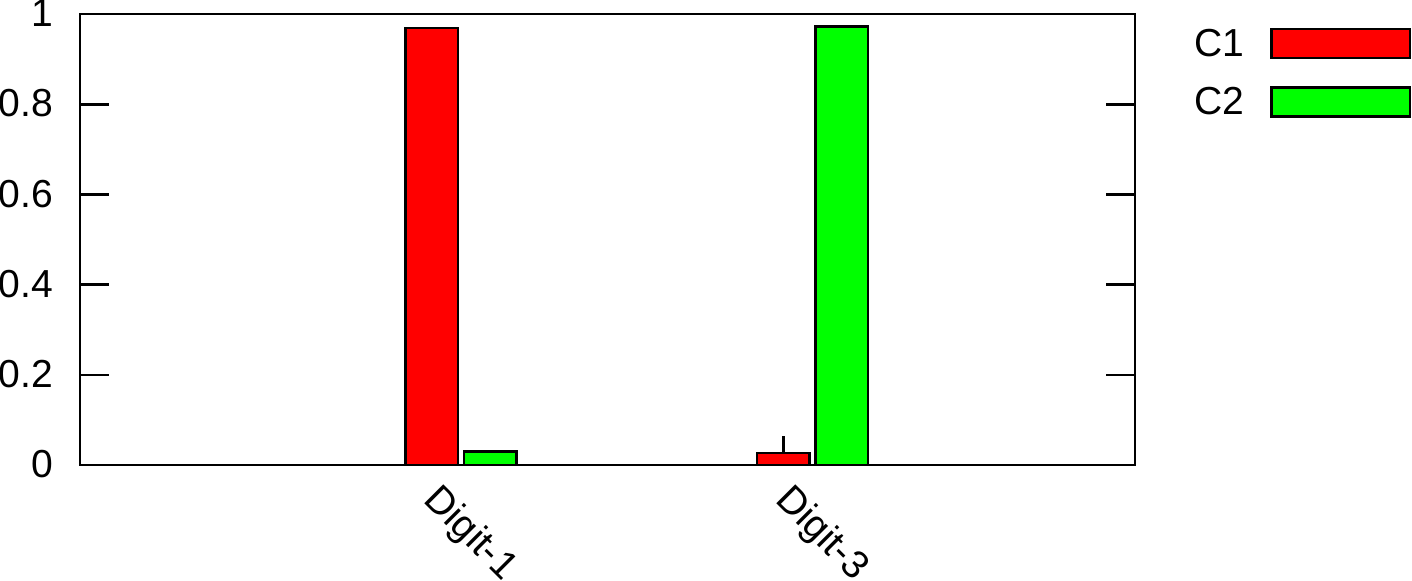}  \\
			\includegraphics[width=0.4\textwidth]{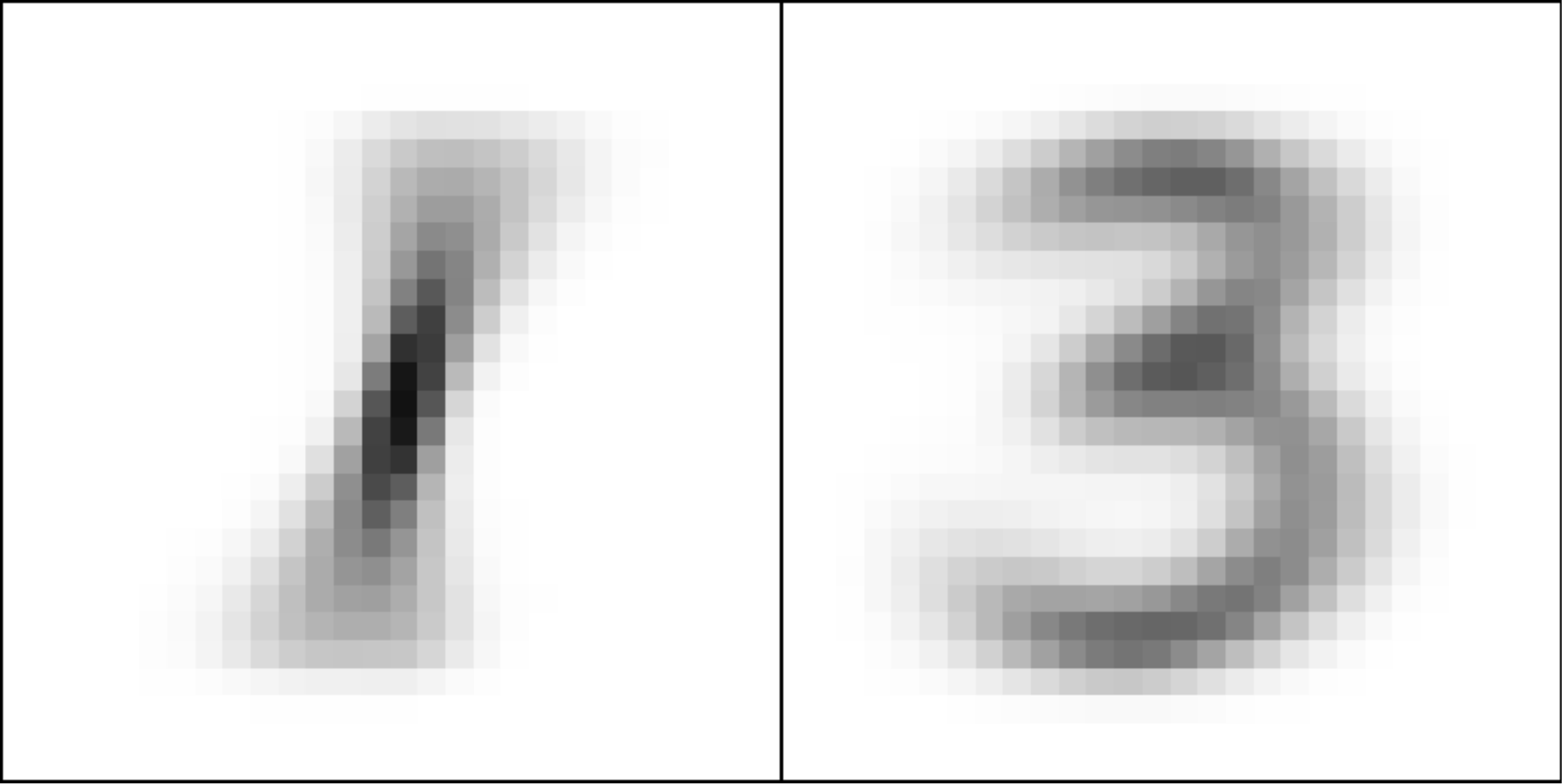}
		\end{tabular}
		\caption{Clusterization histogram for  digits $\mathbf{1},\mathbf{3}$ and the  corresponding Bernoulli parameters.}
		\label{fig:test1}
	\end{figure}
	
	We now choose the digits $\mathbf{3},\mathbf{5}$, and we show the results in Figure \ref{fig:test2}. In this case, we observe that the clusterization is slightly ambiguous, since, in average, the samples of the two types are more similar to each other (see the corresponding Bernoulli parameters). In the histogram we clearly see a repartition of about $60\%$ and $40\%$ (and viceversa) between the two clusters.
	\begin{figure}[!h]
		\centering
		\begin{tabular}{c}
			\includegraphics[width=0.75\textwidth]{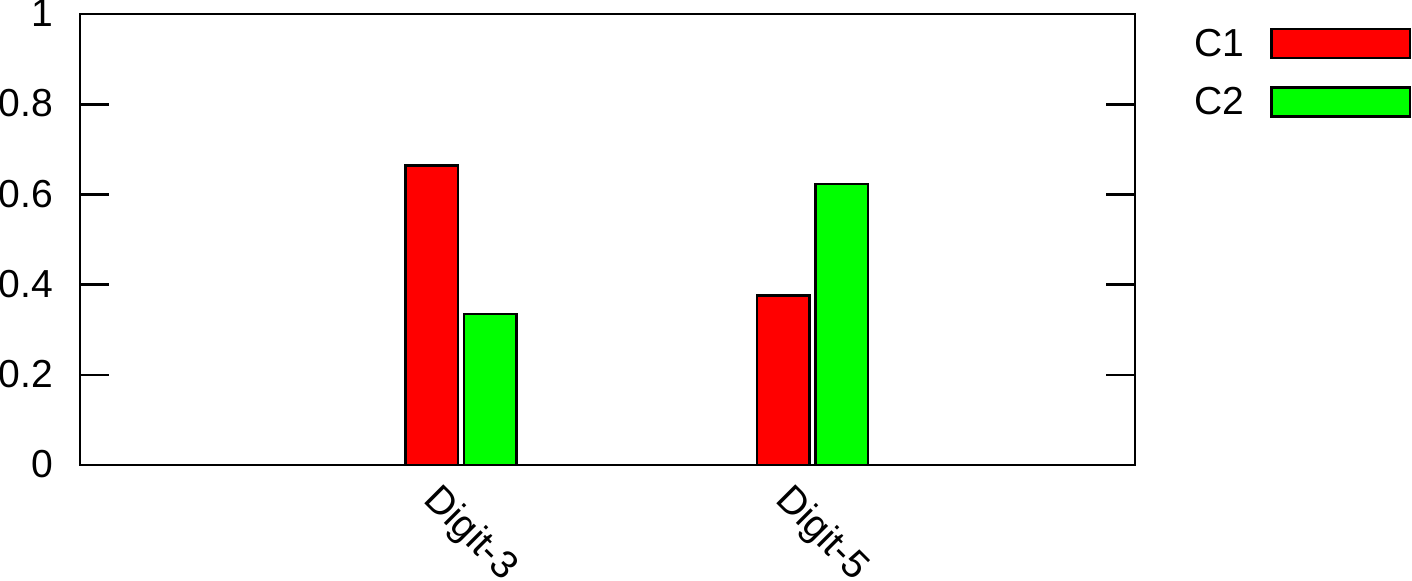}  \\
			\includegraphics[width=0.4\textwidth]{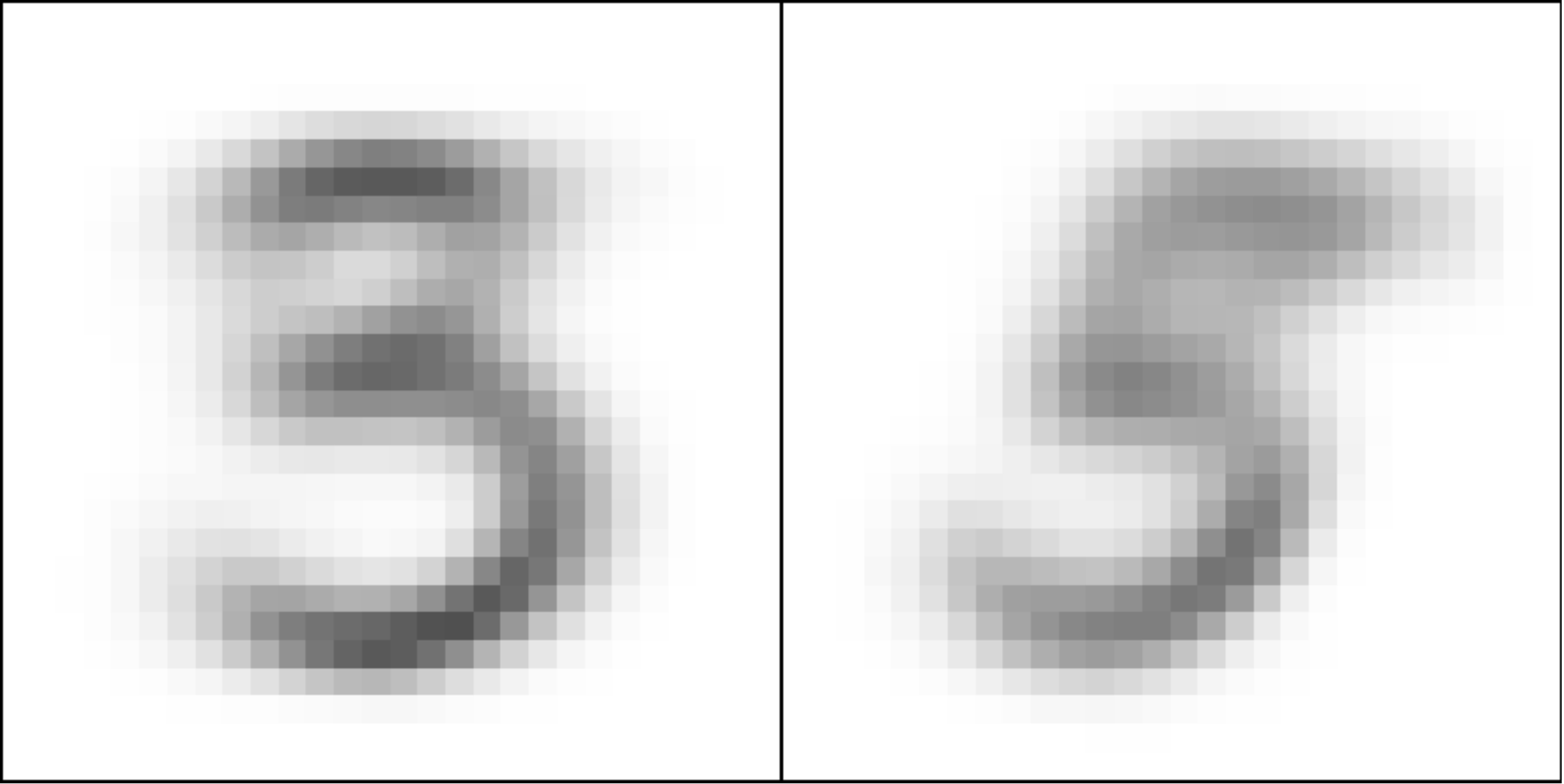}
		\end{tabular}
		\caption{Clusterization histogram for  digits $\mathbf{3},\mathbf{5}$ and the  corresponding Bernoulli parameters.}
		\label{fig:test2}
	\end{figure}
	
	We finally consider the case $K=5$ with even digits $\mathbf{0},\mathbf{2},\mathbf{4},\mathbf{6},\mathbf{8}$. In Figure \ref{fig:test3}, we observe that the chosen digits are, in average, different from each other, so that they are quite well clusterized. \begin{figure}[!h]
		\centering
		\begin{tabular}{c}
			\includegraphics[width=0.75\textwidth]{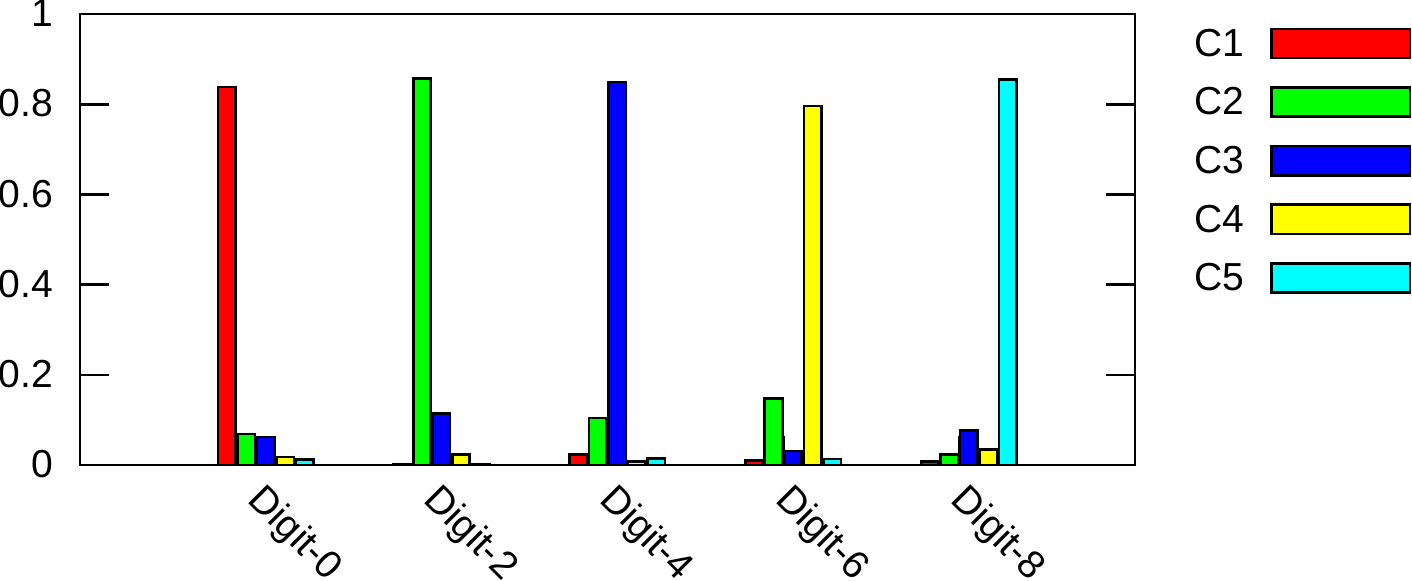}  \\
			\includegraphics[width=0.9\textwidth]{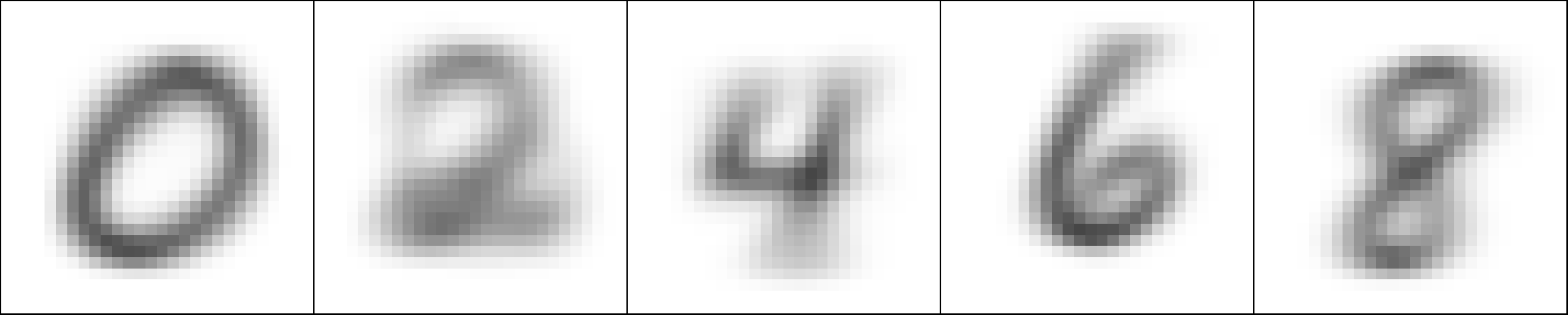}
		\end{tabular}
		\caption{Clusterization histogram for even digits and the  corresponding Bernoulli parameters.}
		\label{fig:test3}
	\end{figure}
	
	It is worth noting that the greatest error in the cluster assignment correspond to digit $\mathbf{6}$, which indeed shares about $15\%$ of its samples with the cluster of the digit $\mathbf{2}$. Looking at the corresponding Bernoulli parameters, we readily see that the two images have very similar vertical alignments, and also a quite large overlapping bottom region. In particular, the Bernoulli parameter for the digit $\mathbf{2}$ is visibly more diffused, and this reflects the inhomogeneity of the corresponding samples. Similar considerations also apply to the pairs of digits $\mathbf{2},\mathbf{4}$ and $\mathbf{4},\mathbf{8}$, with errors in the cluster assignment below $10\%$.       
	
	We now consider the case of categorial distributions, i.e. $S>2$, taking as dataset the Fashion-MNIST database \cite{url_FashionMINST}, see Figure \ref{fig:fashion-mnist}. 
	\begin{figure}[!h]
		\centering
		\includegraphics[width =0.9\textwidth]{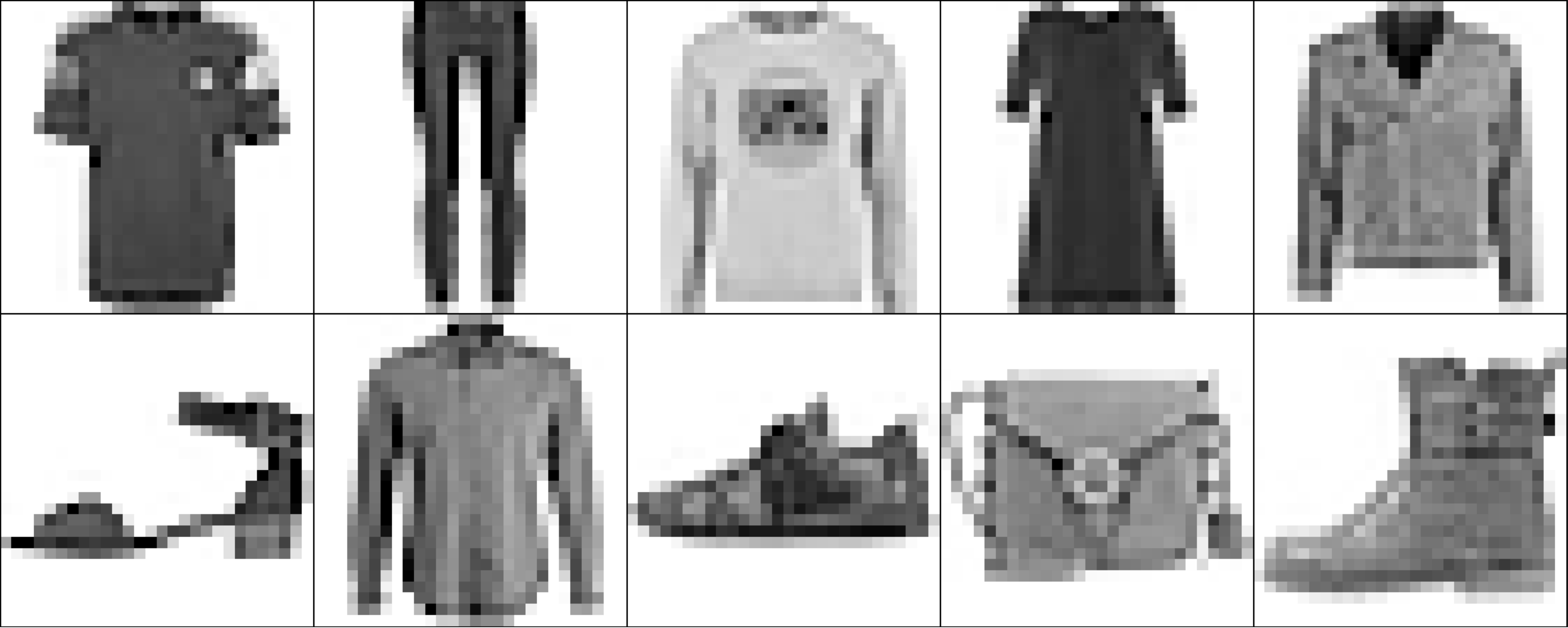}
		\caption{Samples of fashion products from the Fashion-MNIST database}
		\label{fig:fashion-mnist}
	\end{figure}
	
	The database contains $60000$ images of ten categories of fashion products, namely 
	$\{${\bf T-shirt}, {\bf Trouser}, {\bf Pullover}, {\bf Dress}, {\bf Coat}, {\bf Sandal}, {\bf Shirt}, {\bf Sneaker}, {\bf Bag}, {\bf Boot}$\}$, each composed by $28\times 28$ pixels in $256$ grey levels. We turn these images (via hard-thresholding) in images with $S$ grey levels, and represent them by vectors in $\mathcal{S}^D$, where $\mathcal{S}=\{1,\dots,S\}$ and $D=784$.  
	As in the previous tests, we use the label associated to each sample in the database to build the clusterization histogram. Moreover, we visualize the parameters $\theta_k(i)\in[0,1]^D$ of the corresponding categorical distributions in the mixture, for $i=1,\dots,S$ and $k=1,\dots,K$, as grey scale images, i.e. averaging the values with respect to $i$ as $\bar\theta_k=\frac{1}{S}\sum_{i=1}^S\theta_k(i)$. 
	
	In this example, clusterization is very challenging. To give an idea of the issues, we set $S=32$ and compute separately ten \textquotedblleft ideal" categorical distributions associated to the dataset, by simply averaging the pixel values of all the samples of a same type. The result is shown in Figure  \ref{fig:fashion-mnist-averaged}. 
	\begin{figure}[!h]
		\centering
		\includegraphics[width =0.9\textwidth]{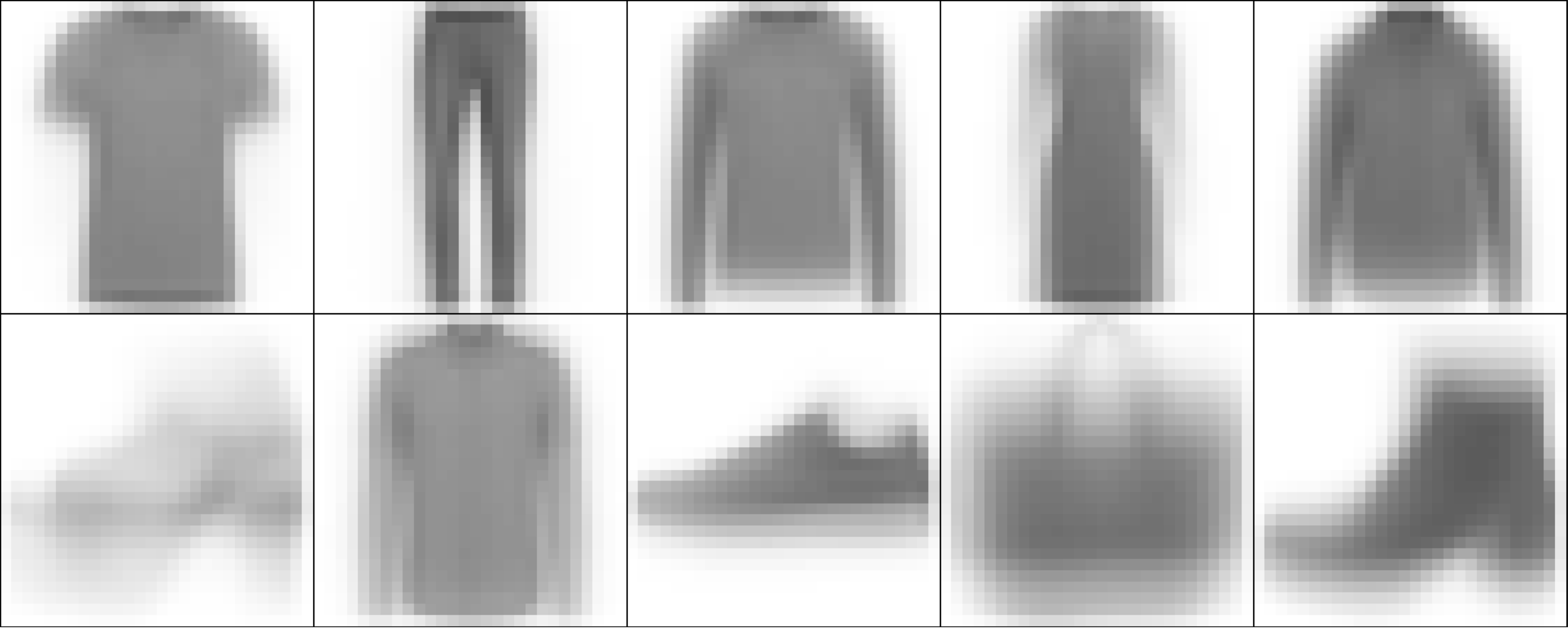}
		\caption{Averaged categorical distributions for the Fashion-MNIST database}
		\label{fig:fashion-mnist-averaged}
	\end{figure}
	
	We clearly see that types {\bf Pullover}, {\bf Coat} and {\bf Shirt} are almost indistinguishable. Moreover, they all have a very large overlap region with the type {\bf T-shirt}, but also with the types {\bf Trouser} and {\bf Dress}. Finally, we observe that the samples of type {\bf Sandal} are so different from each other that the corresponding distribution is completely smoothed out. As one can expect, such drawbacks dramatically affect the quality of the clusterization, but they also suggest how the present model could be improved, for instance including in the cost functions some geometric correlation between the image pixels. This direction of research is currently under development.
	
	We conclude this section with the following tests, which confirm the above considerations. We set $K=2$ and choose the types {\bf T-shirt} and {\bf Trouser}. In Figure \ref{fig:test4}, we show the resulting clusterization histogram and the corresponding categorical parameters. 
	\begin{figure}[!h]
		\centering
		\begin{tabular}{c}
			\includegraphics[width=0.75\textwidth]{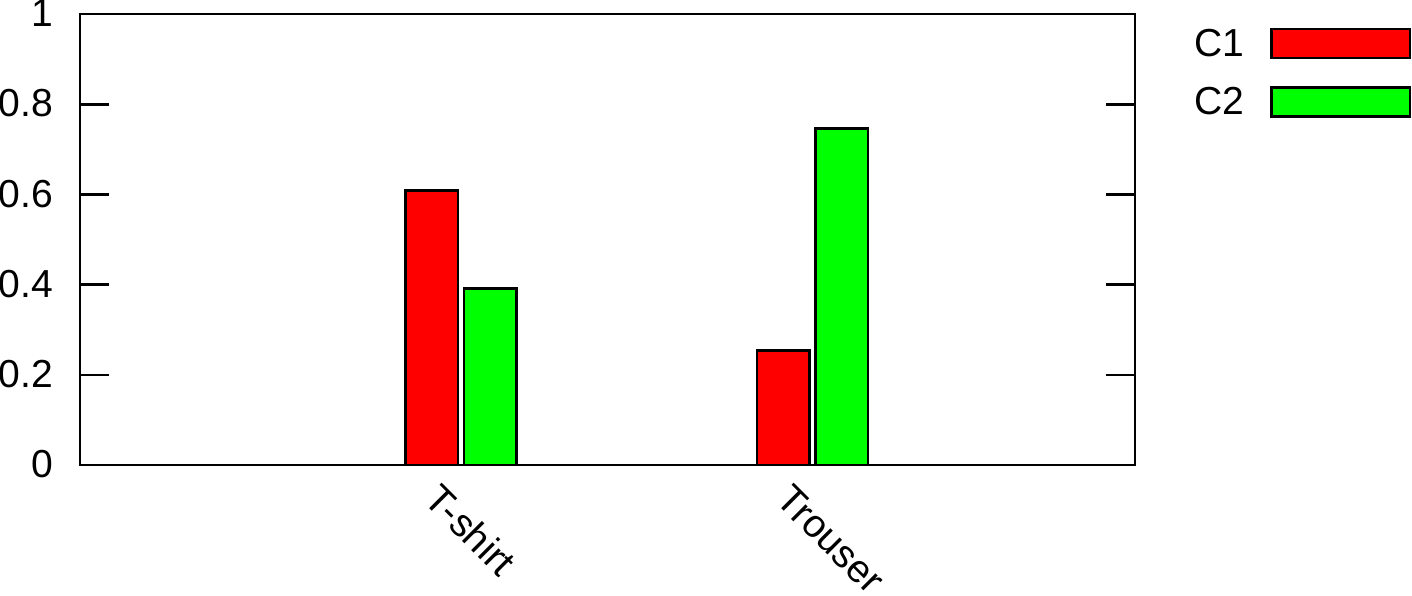}  \\
			\includegraphics[width=0.4\textwidth]{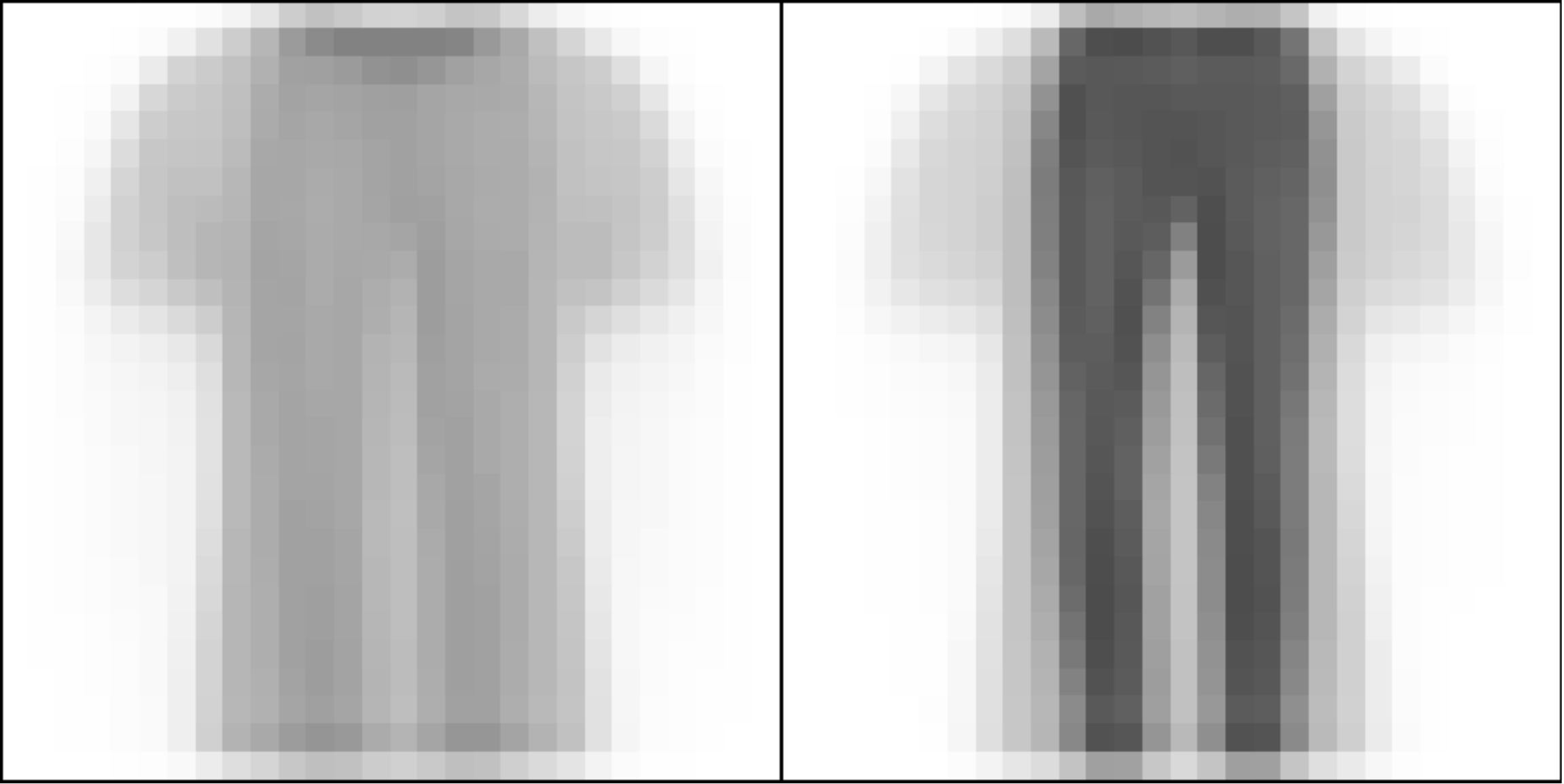}
		\end{tabular}
		\caption{Clusterization histogram for types {\bf T-shirt}, {\bf Trouser} and the  corresponding categorical parameters.}
		\label{fig:test4}
	\end{figure}
	
	Despite the clusterization being still \textquotedblleft acceptable\textquotedblright  (in average more than $60\%$ of samples correctly assigned to the corresponding clusters), we clearly observe a weird mixing of the two types. 
	On the other hand, choosing types which are substantially different from each other, we end up with a good clusterization. This is the case for the example shown in Figure \ref{fig:test5}, where we set $K=4$ and choose the types  {\bf Dress}, {\bf Sneaker}, {\bf Bag} and {\bf Boot}.
	\begin{figure}[!h]
		\centering
		\begin{tabular}{c}
			\includegraphics[width=0.75\textwidth]{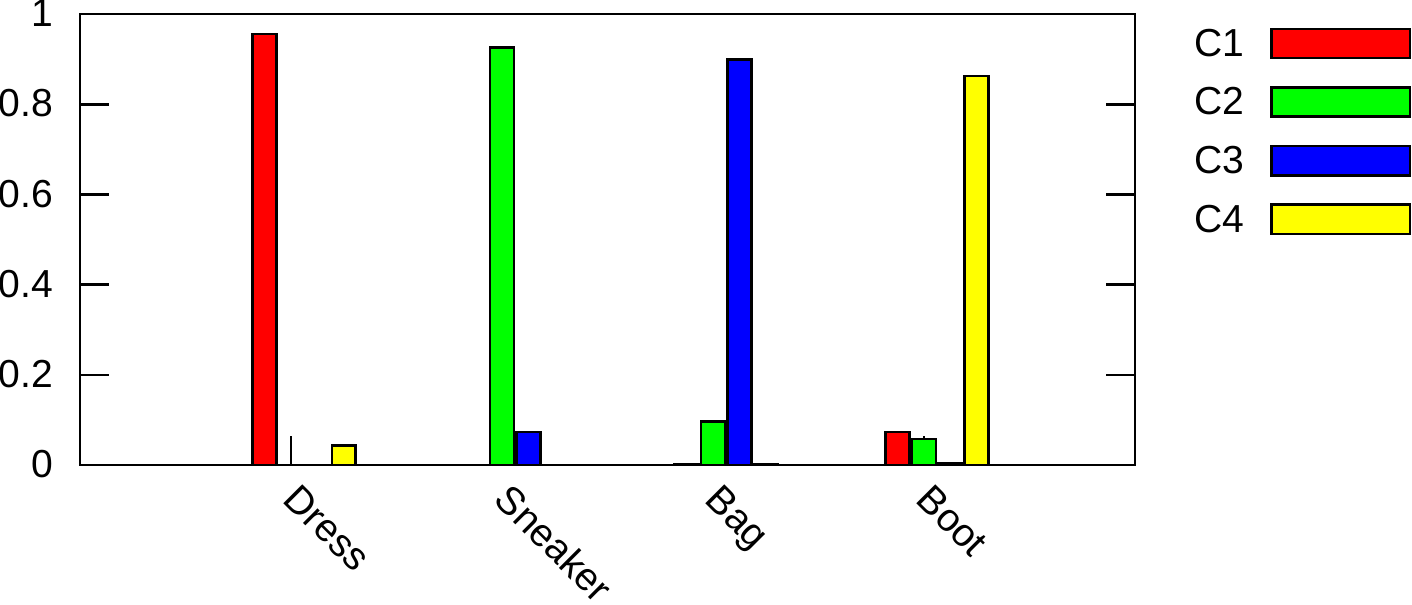}  \\
			\includegraphics[width=0.75\textwidth]{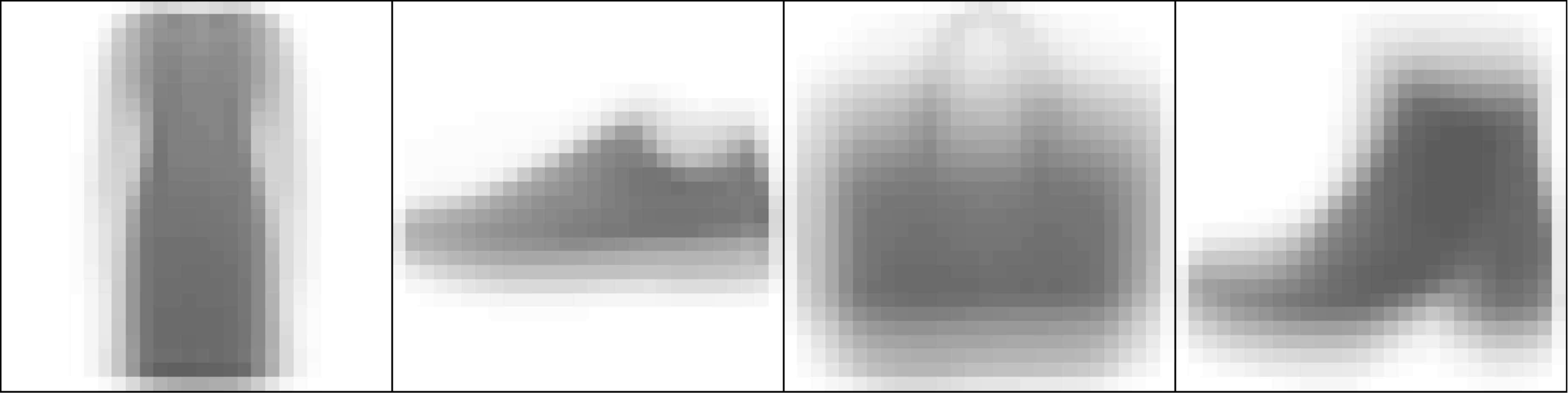}
		\end{tabular}
		\caption{Clusterization histogram for types 
			{\bf Dress}, {\bf Sneaker}, {\bf Bag}, {\bf Boot} and the  corresponding categorical parameters.}
		\label{fig:test5}
	\end{figure}
	\newpage
	\appendix
	\section{A two-states stationary MFG system}\label{sec:appendix}
	In this section, we study some properties of a particular MFG system we need for the analysis performed   in Section \ref{sec:categorical}. In this case, the cost in the Hamilton-Jacobi-Bellman equation does not  depend on the distribution of the population, hence the  coupling term is only in the Fokker-Planck equation which characterizes the stationary distribution.  Here we mainly follow the notations in \cite{gms}. \\
	Given a matrix $P\in\maS$, we denote with $P_i$ the $i$-row of $P$.
	For $\e>0$, we consider  MFG system
	\begin{equation}\label{app:MFG}
		\left\{
		\begin{array}{ll}
			V(i)=\displaystyle{\min_{P_i:\,P_{ij}\ge 0, \sum_j P_{ij}=1}}\left\{\sum_{j=1}^S P_{ij}\left(c(P_{ij})+\e\log(P_{ij})+F(i, \theta)+V(j)\right)\right\}-\l\\ [8pt]
			\pi(i)= \sum_{j=1}^S P_{ji}\pi(j)\\[8pt]
			\pi(i)\ge 0,\, \sum_{i=1}^S\pi(i)=1, \,\sum_{i=1}^S V(i)=0
		\end{array}
		\right.
	\end{equation}
	for $i=1,\dots,S$, where $\e>0$ and the components of the vector  $\theta= (\theta(1), \dots, \theta(S))\in \mathbb{S}$  are fixed parameters and the transition matrix $P$  in the Fokker-Planck equation is composed of the rows $$P_i=\{P_{ij}\}_{j=1}^S$$ which realize  the minimum in the Hamilton-Jacobi-Bellman equation.
	The transition  cost from the state $i$ to the state $j$ is  given by
	\begin{equation*}
		C^\e(P_{ij},\theta)=c(P_{ij})+\e\log(P_{ij}) +F(i, \theta) ,\qquad i,j=1,\dots,S.
	\end{equation*}
	We assume that the cost function $c\in C^1([0,1])$ with $pc(p)$  convex  for $p\in [0,1]$  and $F:\mathcal{S}\times \mathbb{S}\rightarrow \mathbb{R}$ is such that $F(i, \cdot)$ is bounded and continuous for all $ i  \in \mathcal{S}$.\\
	The average cost in the state $i$ for a given choice of the transition matrix $P$ is defined by
	\begin{equation*}
		e_i(P,V)= \sum_{j=1}^S \left(C^\e(P_{ij},\theta)+V(j)\right)P_{ij} \qquad i=1,\dots,S, 
	\end{equation*}
	and we denote with $e(P,V)$ the corresponding $S$-dimensional vector. Note that $e_i(P,V)$ depends only on the $i$-th row of the matrix $P$.\\
	We introduce the definition of Nash minimizer for $e(P,V)$ (see \cite[Definition 1]{gms}). Given the stochastic matrix $P\in \maS$ and a probability vector $q\in \mS$, we denote with $\mR(P,q,i)$   the stochastic matrix obtained by replacing the $i$-th row of $P$ with the vector $q$.
	\begin{definition}\label{app:def_nash_min}
		Given a cost vector $V\in\R^S$, a stochastic matrix $P$ is said to be a Nash minimizer for $e(P,V)$ if for each $i=1,\dots,S$, $q\in\mS$, it holds
		\[e_i(P,V)\le e_i(\mR(P,q,i),V).\]
	\end{definition}
	
	\begin{proposition}\label{app:lemma_existence_Nash} 
		For each vector $V\in\R^S$, there exists a unique Nash minimizer $P$ for $e(P,V)$.
	\end{proposition}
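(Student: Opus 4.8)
The key observation is that the Nash minimizer condition in Definition~\ref{app:def_nash_min} decouples completely across rows: since $e_i(P,V)$ depends only on the $i$-th row $P_i$ of $P$, being a Nash minimizer is equivalent to having each row $P_i$ minimize $e_i(\cdot,V)$ separately over the simplex $\mS$. Thus the plan is to fix $i$ and $V$, and show that the scalar minimization problem
\[
\min_{P_i\in\mS}\ \sum_{j=1}^S P_{ij}\bigl(c(P_{ij})+\e\log(P_{ij})+F(i,\theta)+V(j)\bigr)
\]
has a unique minimizer; taking the collection of these rows then yields the unique Nash minimizer $P$.

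For existence, I would first argue that the objective, viewed as a function of the row $P_i=(P_{i1},\dots,P_{iS})$, is continuous on the compact simplex $\mS$ once we adopt the convention $0\log 0=0$, so the continuous extension attains its minimum by Weierstrass. The entropy term $\e\,P_{ij}\log P_{ij}$ is continuous up to the boundary and its presence is what forces the minimizer into the interior: its derivative blows up like $\e(\log P_{ij}+1)\to-\infty$ as $P_{ij}\to 0^+$, so no minimizer can have a vanishing coordinate, and hence the minimum is attained at an interior point where calculus applies. For uniqueness, I would invoke strict convexity of the objective as a function of $P_i$. Writing the objective as $\sum_j\bigl[P_{ij}c(P_{ij})+\e\,P_{ij}\log(P_{ij})\bigr]+\bigl(F(i,\theta)+V(j)\bigr)P_{ij}$, the terms $\bigl(F(i,\theta)+V(j)\bigr)P_{ij}$ are linear in $P_i$, the maps $p\mapsto pc(p)$ are convex by hypothesis, and each $p\mapsto \e\,p\log p$ is strictly convex on $[0,1]$ with second derivative $\e/p>0$. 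A sum of convex and strictly convex functions of the separate coordinates is strictly convex, and the constraint set $\mS$ is convex, so the minimizer over $\mS$ is unique.

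Assembling the rows, I would conclude that the matrix $P$ whose $i$-th row is the unique minimizer of $e_i(\cdot,V)$ is the unique Nash minimizer: it satisfies the defining inequality by construction, and any other Nash minimizer would have to agree with it row by row by the uniqueness just established. I expect the only subtle point---the \textbf{main obstacle}---to be the boundary behaviour: one must verify carefully that the infimum is not approached along a sequence pushing some coordinate to $0$, i.e. that the entropy penalization genuinely excludes boundary minimizers. This is exactly where the assumption $\e>0$ is essential, and it is worth isolating as a short lemma, since it is precisely this interiority that will later yield the quantitative lower bound $P_{ij}>0$ (and, through the Fokker--Planck equation, the positivity estimate \eqref{eq:cat_FP_positiv}) used in the proof of Theorem~\ref{thm:cat_exixts}. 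The convexity bookkeeping, by contrast, is routine given the stated hypotheses on $c$.
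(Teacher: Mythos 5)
Your proof is correct, and it takes a somewhat more elementary and self-contained route than the paper's. The paper treats the problem as a genuine $S$-player game: it cites \cite[Theorem 1]{gms} for existence (continuity and convexity of $e_i(P,V)$ in $P_i$) and then verifies Rosen's diagonal convexity condition for the gradient map $g_{ij}(P)=\partial e_i(P,V)/\partial P_{ij}$, invoking \cite[Theorem 2]{gms} for uniqueness of the Nash equilibrium. You instead observe that, since $e_i(P,V)$ depends only on the $i$-th row, the Nash condition of Definition \ref{app:def_nash_min} decouples into $S$ independent minimizations over $\mS$, each of which is a strictly convex program (convexity of $p\mapsto pc(p)$ plus strict convexity of $p\mapsto \e p\log p$ plus linear terms), so Weierstrass gives existence and strict convexity gives uniqueness row by row. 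The two arguments are essentially the same computation in different clothing: in this decoupled setting the diagonal convexity inequality the paper verifies is exactly the strict monotonicity of the row-wise gradients, i.e.\ the strict convexity you use directly. What your framing buys is independence from the general game-theoretic machinery of \cite{gms}; what the paper's framing buys is uniformity with the rest of its appendix, which is written in the language of Nash minimizers throughout. One minor remark: the interiority of the minimizer (forced by the blow-up of $\e(\log p+1)$ as $p\to 0^+$) is not needed for this proposition --- existence and uniqueness hold regardless of where the minimizer sits --- and the paper indeed defers that point to the proof of Theorem \ref{app:theorem_positivity}, where positivity of the entries of $P$ is what makes the invariant distribution strictly positive; you correctly identify this as the place where $\e>0$ does real work.
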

	\begin{proof}
		As proved in \cite[Theorem 1]{gms}, existence of a Nash minimizer $P$ follows by the continuity and convexity of the cost $e_i(P,V)$ with respect to the vector $P_i$, $i=1,\cdots,S$.
		Moreover, a straightforward computation gives that the function $g:\mS^S\to \mS^S$, defined by $g_{ij}(P)=\partial e_i(P,V)/\partial P_{ij}$, is diagonally convex, i.e. for all $P^1$, $P^2\in\maS$, $P^1\neq P^2$, it holds
		\[\sum_{i,j}(P^1_{ij}-P^2_{ij})(g_{ij}(P^1)-g_{ij}(P^2))>0.
		\]
		Indeed,  for any $i, j \in \{1, \dots, S\}$ and $P \in \maS$, we have 
		\begin{align*}
			\frac{\partial e_i(P, V)}{\partial P_{ij}} 
			&=P_{ij}(c'(P_{ij}) + \e \frac{1}{P_{ij}}) + c_{ij}(P_{ij})+\e\log(P_{ij}) + V(j)+ F(i,\th).
		\end{align*}
		Hence, 
		\begin{align*}(P^1_{ij} - P^2_{ij}) \cdot &\left(g_{ij}(P^1) - g_{ij}(P^2)\right) = 
			\e (P^1_{ij} - P^{2}_{ij})\log\left(\frac{P^1_{ij}}{P^2_{ij}}\right)\\&+  P^1_{ij}c(P^1_{ij})-P^2_{ij}c(P^2_{ij})-(P^2_{ij}c'(P^2_{ij})+c(P^2_{ij}))(P^1_{ij}-P^2_{ij})\\
			&+P_{ij}^2c(P^2_{ij})-P^1_{ij}c(P^1_{ij})-(P^1_{ij}c'(P^1_{ij})+c(P^1_{ij}))(P^2_{ij}-P^1_{ij}))>0,
		\end{align*} 
		because of the monotonicity of $\log(p)$ and the convexity  of $pc(p)$ for $p\in[0,1]$. By	\cite[Theorem 2]{gms}, we get the uniqueness of the Nash minimizer.
	\end{proof}
	We now prove that  $\pi$ has positive mass   for any state $i=1,\dots,S$. This is a crucial result for the existence of a solution to the multi-population MFG system studied in Section \ref{sec:categorical}. We need some preliminary results.
	\begin{lemma}\label{app:lemma_HJB}
		For any vector $\theta\in \mS$, there exists a unique solution $(\l_\th,V_\th)$ to the Hamilton-Jacobi-Bellman equation
		\begin{equation}\label{app:HJB}
			\left\{
			\begin{array}{ll}
				V(i)=\displaystyle{\min_{P_i:\,P_{ij}\ge 0, \sum_j P_{ij}=1}}\left\{\sum_{j=1}^S \left(C^\e(P_{ij},\theta)+V(j)\right)P_{ij}\right\}-\l\\ [8pt]
				\sum_{i=1}^S V(i)=0.
			\end{array}
			\right.
		\end{equation}
		Moreover, if $\th\to\bar\th$, then $(\l_\th,V_\th)$
		converges to  $(\l_{\bar\th},V_{\bar\th})$.
	\end{lemma}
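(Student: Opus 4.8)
\emph{Plan.} I would begin from the observation that the row-wise minimisation in \eqref{app:HJB} decouples across the state $i$. Since $\sum_j P_{ij}=1$, the only $i$-dependence inside the bracket is the additive term $F(i,\theta)$, while the remaining part $\sum_j P_{ij}\big(c(P_{ij})+\e\log P_{ij}+V(j)\big)$ is, for \emph{every} $i$, literally the same optimisation problem over the simplex $\mS$. This suggests introducing the scalar functional
\begin{equation*}
g(W)=\min_{q\in\mS}\sum_{j=1}^S q(j)\big(c(q(j))+\e\log q(j)+W(j)\big),\qquad W\in\R^S,
\end{equation*}
so that the Hamilton--Jacobi--Bellman operator becomes $\min_{P_i}\{\cdots\}=F(i,\theta)+g(V)$ and \eqref{app:HJB} reduces to the system $V(i)=F(i,\theta)+g(V)-\l$ for $i=1,\dots,S$ together with $\sum_i V(i)=0$. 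The second ingredient I would record is that $g$ is translation invariant, $g(W+t\mathbf 1)=g(W)+t$ for $t\in\R$, again because $\sum_j q(j)=1$.

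From this reduced form existence and uniqueness are essentially algebraic. The identity $V(i)=F(i,\theta)+g(V)-\l$ forces $V(i)-F(i,\theta)$ to be independent of $i$, hence $V=F(\cdot,\theta)+c_0\mathbf 1$ for some constant $c_0$; inserting this into $g$ and using translation invariance gives $g(V)=g(F(\cdot,\theta))+c_0$, so $c_0$ cancels and one reads off $\l=g(F(\cdot,\theta))$, while the normalisation $\sum_i V(i)=0$ fixes $c_0=-\frac1S\sum_{i'}F(i',\theta)$. This produces the explicit pair
\begin{equation*}
V_\theta(i)=F(i,\theta)-\frac1S\sum_{i'=1}^S F(i',\theta),\qquad \l_\theta=\min_{q\in\mS}\sum_{j=1}^S q(j)\big(c(q(j))+\e\log q(j)+F(j,\theta)\big),
\end{equation*}
and the same computation shows it is the \emph{only} solution with $\sum_i V(i)=0$. (As a consistency check, for $S=2$ and the costs of Section~\ref{sec:bernoulli} this formula reproduces the value function $V_k^d$ found in Proposition~\ref{p:cons_MFG}.)

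The only genuinely analytic point to verify carefully is the well-posedness and continuity of $g$, and here the entropy penalisation is essential. The integrand $q\mapsto \sum_j q(j)\big(c(q(j))+\e\log q(j)+W(j)\big)$ extends continuously to the compact simplex $\mS$ (with $0\log 0=0$) and is strictly convex, since $pc(p)$ is convex by hypothesis and $\e\,p\log p$ is strictly convex for $\e>0$; moreover $\frac{d}{dp}(\e p\log p)\to-\infty$ as $p\to0^+$ rules out minimisers on the boundary, so the minimiser is unique, interior, and has all components positive. These are exactly the properties of Proposition~\ref{app:lemma_existence_Nash} applied to a single row. In particular $g(W)$ is finite and, being a minimum of functions that are affine in $W$ with coefficients summing to $1$, it is $1$-Lipschitz for the sup norm, hence continuous. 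Continuity in $\theta$ is then immediate from the explicit formulas: $V_\theta$ is continuous because each $F(i,\cdot)$ is continuous, and $\l_\theta=g\big(F(\cdot,\theta)\big)$ is a composition of the continuous map $\theta\mapsto F(\cdot,\theta)\in\R^S$ with the Lipschitz functional $g$, so $\th\to\bar\th$ yields $(\l_\th,V_\th)\to(\l_{\bar\th},V_{\bar\th})$.

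Thus the whole substance is concentrated in the scalar minimisation defining $g$, the decisive simplification being that the running cost $c(P_{ij})+\e\log P_{ij}$ sees the transition only through the entry $P_{ij}$, making the row problem identical for every $i$. If one instead wanted to allow genuinely $(i,j)$-dependent costs, where this decoupling fails, the natural fallback is a vanishing-discount argument: solve the $\beta$-discounted equation by Banach's fixed point, bound $\mathrm{osc}(V_\beta)$ uniformly in $\beta$ by comparing two states through a one-step control swap (possible since the admissible controls are the full simplex), and pass to the limit $\beta\to1^-$. In that setting the main obstacle, and again the place where $\e>0$ is indispensable, would be uniqueness of $V$, which I would obtain from the irreducibility of the optimal chain guaranteed by the strictly positive transition probabilities $P^*_{ij}>0$.
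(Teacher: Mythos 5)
Your argument is correct, but it takes a genuinely different route from the paper. The paper's proof is essentially a citation: existence and uniqueness of $(\l_\th,V_\th)$ are obtained from Proposition~8 and Theorem~4 of \cite{gms} (noting that the running cost does not depend on $\pi$, so the hypotheses there are trivially met), and the convergence as $\th\to\bar\th$ follows from a soft compactness-plus-uniqueness argument using continuity of the costs $e_i(P,V)$ in $\th$; no explicit solution is ever written down. You instead exploit the special structure of \eqref{app:HJB} --- the running cost $c(P_{ij})+\e\log P_{ij}$ sees the transition only through the entry value, so every row solves the identical problem over $\mS$ and $F(i,\th)$ enters as an additive shift --- to reduce the equation to $V(i)=F(i,\th)+g(V)-\l$ and read off the closed form $V_\th=F(\cdot,\th)-\frac1S\sum_{i'}F(i',\th)$, $\l_\th=g(F(\cdot,\th))$. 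I checked the algebra (including the translation invariance $g(W+t\mathbf 1)=g(W)+t$, the strict convexity and interiority of the row minimiser for $\e>0$, and the consistency with the explicit $V_k^d$ and $\l_k^d=0$ of Proposition~\ref{p:cons_MFG}), and it is sound; uniqueness is immediate since $V(i)-F(i,\th)$ must be constant and the normalisation fixes that constant. What each approach buys: yours is self-contained, elementary, and quantitative (continuity in $\th$ upgrades to Lipschitz dependence through the $1$-Lipschitz functional $g$), and it makes transparent \emph{why} $\l_\th$ has the form it does; the paper's citation-based proof, like the vanishing-discount fallback you sketch at the end, is robust to the genuinely $(i,j)$-dependent costs the authors announce as future work, where your decoupling collapses. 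The one point worth making explicit if this were written up is the $0\log 0=0$ convention on the boundary of the simplex, which you do address.
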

	\begin{proof}
		Existence and uniqueness of a solution to \eqref{app:HJB} follows from \cite[Proposition 8 and Theorem 4]{gms}. Indeed, since the cost in the Hamilton-Jacobi-Bellman equation is independent of the distribution $\pi$, the assumptions of Proposition 8 and Theorem 4 in \cite{gms} are trivially satisfied. The convergence of  $(\l_\th,V_\th)$ to  $(\l_{\bar\th},V_{\bar\th})$ follows by the continuity of the costs $e_i(P,V)$ with respect to $\theta$ and the uniqueness of the solution to \eqref{app:HJB}.
	\end{proof}
	Proposition \ref{app:lemma_existence_Nash} and the continuity of the average cost $e(P,V)$ implies the continuity  of the Nash minimizer (see \cite[Proposition 1]{gms}).
	\begin{lemma}\label{app:lemma_continuity_Nash}
		The  Nash minimizer $P(V)$ is a continuous function of $V$.
	\end{lemma}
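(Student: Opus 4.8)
The plan is to combine the uniqueness granted by Proposition \ref{app:lemma_existence_Nash} with the joint continuity of the average cost in a standard sequential compactness argument. Fix $V\in\R^S$ and take an arbitrary sequence $V_n\to V$ in $\R^S$; write $P_n=P(V_n)$ for the unique Nash minimizer associated with $V_n$. To prove $P_n\to P(V)$ it is enough to show that every subsequence of $(P_n)$ admits a further subsequence converging to $P(V)$. So fix a subsequence; since $\maS$ is compact, we may extract a sub-subsequence, which I still denote $(P_n)$, with $P_n\to\bar P\in\maS$. The whole matter then reduces to identifying the limit as $\bar P=P(V)$.

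The key step is passing to the limit in the Nash inequality of Definition \ref{app:def_nash_min}. For every $n$, every $i=1,\dots,S$ and every $q\in\mS$,
\[
e_i(P_n,V_n)\le e_i(\mR(P_n,q,i),V_n).
\]
Since $e_i(P,V)=\sum_{j=1}^S\left(C^\e(P_{ij},\theta)+V(j)\right)P_{ij}$ depends only on the $i$-th row of $P$, the right-hand side equals $\sum_{j=1}^S\left(C^\e(q(j),\theta)+V_n(j)\right)q(j)$ and is unaffected by the other rows. Both sides are continuous functions of $(P,V)$ on $\maS\times\R^S$: the only term not manifestly continuous is the entropy contribution $\e\,P_{ij}\log P_{ij}$, but with the convention $0\log 0=0$ the map $p\mapsto \e\,p\log p$ extends continuously to $p=0$, so $e_i$ is continuous up to the boundary of the simplex. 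Letting $n\to\infty$ and using $P_n\to\bar P$, $V_n\to V$ together with the continuity of $F(i,\cdot)$, we obtain $e_i(\bar P,V)\le e_i(\mR(\bar P,q,i),V)$ for all $i$ and all $q\in\mS$, i.e. $\bar P$ is a Nash minimizer for $e(\cdot,V)$.

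By the uniqueness statement in Proposition \ref{app:lemma_existence_Nash}, the Nash minimizer for $e(\cdot,V)$ is unique, whence $\bar P=P(V)$. Thus every subsequence of $(P_n)$ has a further subsequence converging to the same limit $P(V)$, so the full sequence converges and $V\mapsto P(V)$ is continuous, as in \cite[Proposition 1]{gms}. The one point deserving care is the behaviour of the penalization at the boundary of $\mS$: the minimizers $P_n$ have strictly positive entries because of the $\e\log(P_{ij})$ term, but a priori the limit $\bar P$ might touch the boundary, and it is precisely the continuous extension of $p\mapsto \e\,p\log p$ at $p=0$ that guarantees the limiting inequality remains valid. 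I expect this boundary continuity to be the only genuine obstacle; everything else is routine compactness and uniqueness.
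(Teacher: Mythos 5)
Your argument is correct and follows the same route the paper takes: the paper disposes of this lemma in one line by invoking the uniqueness of the Nash minimizer from Proposition \ref{app:lemma_existence_Nash} together with the continuity of the average cost $e(P,V)$, deferring the details to \cite[Proposition 1]{gms}. Your compactness--plus--uniqueness subsequence argument, including the careful continuous extension of $p\mapsto \e\,p\log p$ at $p=0$, is exactly the standard way to make that one-line justification rigorous, so there is nothing to add.
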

	\begin{theorem}\label{app:theorem_positivity}
		For any vector $\theta\in \mS$, there exists a unique solution $(V_\th,\l_\th,\pi_\theta)\in \R^S\times\R\times\mS$ to \eqref{app:MFG}. Moreover, there exists a positive constant $c(\e)\in (0,1)$ such that, for each $\theta\in \mS$ and for each $i=1,\dots,S$
		\begin{equation}\label{app:positivity_measure}
			\pi_\th(i)\ge c(\e)>0.
		\end{equation}
	\end{theorem}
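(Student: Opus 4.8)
The plan is to decompose system~\eqref{app:MFG} into its Hamilton--Jacobi--Bellman part and its Fokker--Planck part, solve them in sequence, and then extract a uniform lower bound on the invariant measure from uniform bounds on the value function and on the optimal transition matrix. First I would invoke Lemma~\ref{app:lemma_HJB} to obtain, for every fixed $\th\in\mS$, the unique solution $(V_\th,\l_\th)$ of the ergodic equation~\eqref{app:HJB}. By Proposition~\ref{app:lemma_existence_Nash} the rows minimizing the first line of~\eqref{app:MFG} are then uniquely determined as the Nash minimizer $P=P(V_\th)$. The key structural observation is that the entropy penalization forces $P$ to have strictly positive entries: differentiating the $i$-th objective $\sum_j P_{ij}(c(P_{ij})+\e\log(P_{ij})+F(i,\th)+V(j))$ in $P_{ij}$ produces a term $\e\log(P_{ij})$ which tends to $-\infty$ as $P_{ij}\to 0^+$, so putting a small amount of mass on any empty coordinate strictly decreases the cost; hence the constrained minimum over the simplex is attained in the interior and $P_{ij}>0$ for all $i,j$.

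Given that $P$ is a stochastic matrix with all entries positive, it is primitive, so by the Perron--Frobenius theorem it admits a unique stationary distribution $\pi_\th$, characterized by $\pi_\th(i)=\sum_j P_{ji}\pi_\th(j)$ and $\sum_i\pi_\th(i)=1$, and this distribution is strictly positive. Since $(V_\th,\l_\th)$ is unique by Lemma~\ref{app:lemma_HJB}, $P$ is unique by Proposition~\ref{app:lemma_existence_Nash}, and $\pi_\th$ is then the unique invariant measure of $P$, this yields existence and uniqueness of the full triple $(V_\th,\l_\th,\pi_\th)$ solving~\eqref{app:MFG}.

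For the uniform bound~\eqref{app:positivity_measure} I would first note that, since $\th\mapsto(V_\th,\l_\th)$ is continuous by Lemma~\ref{app:lemma_HJB} and $\mS$ is compact, the family $\{V_\th\}_{\th\in\mS}$ is uniformly bounded, say $|V_\th(j)|\le M$ for all $j$ and $\th$; together with the boundedness of $F$ and the $C^1$ control of $\phi(p):=pc(p)$ on $[0,1]$, all quantities entering the first-order optimality conditions are bounded independently of $\th$. Writing the Lagrangian stationarity condition for the interior minimizer, namely $\phi'(P_{ij})+\e\log(P_{ij})+\e+F(i,\th)+V_\th(j)=\m_i$, and estimating the multiplier $\m_i$ via a coordinate $j_0$ with $P_{ij_0}\ge 1/S$, I would deduce a lower bound $P_{ij}\ge\eta(\e)>0$ depending only on $\e$, $S$, $\sup|F|$ and $M$. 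The conclusion is then immediate: the Fokker--Planck relation gives $\pi_\th(i)=\sum_j P_{ji}\pi_\th(j)\ge\eta(\e)\sum_j\pi_\th(j)=\eta(\e)$, so~\eqref{app:positivity_measure} holds with $c(\e)=\eta(\e)$, which lies in $(0,1)$ since $S\,c(\e)\le\sum_i\pi_\th(i)=1$.

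The main obstacle is the uniformity in $\th$ of the lower bound on the entries of $P$: the positivity of each individual matrix $P$ is an easy consequence of the entropy term, but obtaining a bound independent of $\th$ requires controlling the value function $V_\th$ uniformly, which is exactly where the continuous dependence provided by Lemma~\ref{app:lemma_HJB} combined with the compactness of the simplex $\mS$ does the essential work.
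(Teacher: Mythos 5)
Your argument is correct, and for the key estimate \eqref{app:positivity_measure} it takes a genuinely different route from the paper. The paper proves the uniform lower bound by contradiction and compactness: it assumes $\pi_{\th_n}(i)\to 0$ along a sequence $\th_n\to\th$, uses Lemma \ref{app:lemma_HJB} together with Lemma \ref{app:lemma_continuity_Nash} (continuity of the Nash minimizer) to pass to the limit in the Fokker--Planck relation, and then contradicts the strict positivity of the invariant distribution of the limiting positive matrix $P(V_\th)$. You instead extract an \emph{explicit} lower bound on the entries of $P$ from the first-order optimality condition $\phi'(P_{ij})+\e\log(P_{ij})+\e+F(i,\th)+V_\th(j)=\m_i$, estimating the multiplier through a coordinate carrying mass at least $1/S$, and then propagate it to $\pi_\th$ via $\pi_\th(i)=\sum_j P_{ji}\pi_\th(j)\ge\eta(\e)$. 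Your route buys a constructive constant of the form $c(\e)\sim S^{-1}\exp(-C/\e)$, making the degeneration as $\e\to 0$ quantitative (which is relevant to the paper's later remark on the limit $\e\to0$), and it dispenses with Lemma \ref{app:lemma_continuity_Nash} entirely, needing only the uniform bound $\sup_{\th\in\mS}\|V_\th\|<\infty$ (which you correctly obtain from the continuity in Lemma \ref{app:lemma_HJB} plus compactness of $\mS$, and which could even be read off the ergodic HJB equation directly). The paper's soft argument is shorter but non-quantitative. The existence and uniqueness part of your proof (decoupling HJB from Fokker--Planck, interior minimizer by the entropy term, Perron--Frobenius for the positive stochastic matrix) coincides with what the paper does implicitly when it invokes Propositions \ref{app:lemma_existence_Nash} and Lemma \ref{app:lemma_HJB}.
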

	\begin{proof}
		Note that, because of the particular structure of the system \eqref{app:MFG}, Propositions \ref{app:lemma_existence_Nash} and \ref{app:lemma_HJB}   immediately ensure existence and uniqueness of a solution to \eqref{app:MFG}. \\
		To show \eqref{app:positivity_measure}, we argue by contradiction assuming that there exists a sequence $\th_n\in \mathbb{S}$ such that $\pi_{\th_n}(i_n)\to 0$ for $n\to \infty$. Since the state space is finite we assume that, up to a subsequence, $i_n=i$ for any $n$. Let $\th\in \mathbb{S}$ be such that, up to a subsequence, $\th_n\to \th$. By Lemma \ref{app:lemma_HJB} and \ref{app:lemma_continuity_Nash}, it follows that $P(V_{\th_n})\to P(V_{\th})$, where $P(V_{\th_n})$ and $P(V_{\th})$ are the Nash minimizers   corresponding to  the solution of \eqref{app:HJB} with $\th_n$ and, respectively, with $\th$. By the equation
		\[\pi_{\th_n}=\pi_{\th_n}P(V_{\th_n}),\]
		satisfied by the invariant distribution associated to
		$P(V_{\th_n})$, we get that, up to a subsequence, $\pi_{\th_n}$ converges to $\bar \pi\in \mS$ satisfying
		\[\bar\pi =\bar\pi P(V_{\th}),\]
		with $\bar\pi(i)=0$. But, because of entropy regularization term, all the entries $P_{ij}$  of the transition matrix $P(V_{\th})$  are strictly  positive, hence $P$ is a positive matrix and   the associated invariant distribution satisfies $\bar\pi(i)>0$ for each $i=1,\dots,S$. By the contradiction, we obtain \eqref{app:positivity_measure}.
	\end{proof}
	%
	
	%
	

	\medskip
	
	\begin{flushright}
		\noindent \verb"laura.aquilanti@sbai.uniroma1.it"\\
		\noindent \verb"fabio.camilli@sbai.uniroma1.it"\\
		SBAI, Sapienza Universit\`{a} di Roma\\
		via A.Scarpa 14, 00161 Roma (Italy)	
	\end{flushright}
	
	\begin{flushright}	
		\noindent \verb"cacace@mat.uniroma3.it"\\
		Dipartimento di Matematica e Fisica\\
		Universit\`{a} degli Studi Roma Tre\\
		Largo S. L. Murialdo 1, 00146   Roma (Italy)
	\end{flushright}
	
	\begin{flushright}
		\noindent \verb"r.demaio@iconsulting.biz"\\
		IConsulting\\
		Via della Conciliazione 10, 00193  Roma (Italy)	
	\end{flushright}

\end{document}